%% file: Main.tex
\documentclass[letterpaper, 10pt, conference]{ieeeconf}
\IEEEoverridecommandlockouts
\usepackage{amsmath,amsfonts,amssymb}
\usepackage{algorithmic}
\usepackage{algorithm}
\usepackage{array}
\usepackage[caption=false,font=footnotesize,labelfont=rm,textfont=rm]{subfig}
\usepackage{subfig}
\usepackage{textcomp}
\usepackage{stfloats}
\usepackage{url}
\usepackage{verbatim}
\usepackage{graphicx}
\usepackage{cite}
\usepackage{mathrsfs}
\usepackage{booktabs}
\usepackage{siunitx}
\usepackage{balance}
\usepackage{leftindex}
\usepackage{pifont}
\usepackage{xcolor}
\usepackage{diagbox}

\usepackage{listings}

\usepackage{ntheorem}
\theoremstyle{plain}
\newtheorem{definition}{Definition}

\newtheorem{lemma}{Lemma}
\newtheorem{theorem}{Theorem}
\newtheorem{remark}{Remark}
\newtheorem{corollary}{Corollary}

\title{\LARGE \bf Static Timing Orchestration for Tree-Structured \\ Robot Control Firmware}

\author{Wang Xi, Feiran Wei, Mo Deng, Weiheng Lin, Pangkit Fong and Jianping He
\thanks{The authors are with the Department of Automation, Shanghai Jiao Tong University, and Key Laboratory of System Control and Information Processing, Ministry of Education of China, Shanghai, China. Corresponding E-mail address: \{bddwyx,fpjgaoge,jphe\}@sjtu.edu.cn}
}

\begin{document}

\maketitle
\thispagestyle{empty}
\pagestyle{empty}

\definecolor{Update}{rgb}{0.7176, 0.6627, 0.9215}
\definecolor{Handle}{rgb}{0.9333, 0.5059, 0.7412}

\newcommand{\SymbUpdate}{{\color{Update} $\blacksquare$} }
\newcommand{\SymbHandle}{{\color{Handle} \ding{108}} }
\newcommand{\FineMote}{\textsc{FineMote} }
\newcommand{\ROV}{\textsc{FINS-ROV} }

\begin{abstract}
As robotic systems become increasingly complex, generating control firmware from structural description files has emerged as a promising paradigm for reducing development complexity and improving maintainability.
Existing robot description formats naturally represent robotic systems as hierarchical tree structures, where devices are recursively composed into functional subsystems and eventually into the complete robot.
However, such tree-structured organization also introduces structured data dependencies that affect perception-to-decision latency and, consequently, control performance.

In this paper, we propose \FineMote, a control firmware generation framework with a scheduling mechanism tailored for tree-structured device models.
The framework objectifies heterogeneous low-level control logic and exposes unified scheduling units and execution entry points.
Based on the resulting object hierarchy, the scheduling mechanism exploits compile-time information to statically determine execution order with minimal runtime overhead.
We prove that the proposed mechanism satisfies deadline and precedence constraints, and further derive an upper bound on intra-tree decision latency.
We implement the proposed framework and evaluate it on real robotic control platforms.
The experimental results show improved timing behavior and runtime responsiveness, demonstrating the practical effectiveness of the proposed design.
\end{abstract}

\input{Contents/introduction}
\input{Contents/Formulation}
\input{Contents/Scheduling}
\input{Contents/Latency}
\input{Contents/Experiment}
\input{Contents/Related_Works}
\input{Contents/Conclusion}
\input{Contents/Appendix}
\balance

\bibliography{Refs} 
\bibliographystyle{IEEEtran}

\vfill

\end{document}

%% file: Contents/Introduction.tex
\section{Introduction}

With the increasing scale of robotic applications, scheduling strategies for robotic systems have attracted growing attention.
To interact reliably with the physical world, robots are typically equipped with low-level control firmware that drives sensors, actuators, and other physical devices.
Modern robots integrate many heterogeneous devices through multiple communication buses, making such firmware highly complex.
Consequently, developing and maintaining control firmware through conventional manual workflows is difficult and error-prone.

A common way to manage this complexity is to construct firmware from structural descriptions.
Robot description formats such as the Unified Robot Description Format (URDF)~\cite{URDF} and xacro~\cite{xacro} were introduced for robot modeling and analysis, and their XML-like structure naturally describes a tree-structured device hierarchy.
For example, leaf devices such as motors form higher-level components such as manipulators, which are further composed into a complete robot system.
Such structural information can serve as an input to firmware generation frameworks.
For instance, ros2\_control~\cite{ros2_control_docs} uses dedicated URDF fields to specify hardware interfaces and binds them with device libraries to construct control code.
Similarly, AUTOSAR~\cite{AUTOSAR} adopts an XML-based paradigm for control software generation in the automotive domain.

However, building such a framework is itself nontrivial.
A generation framework must bridge high-level structural descriptions and low-level firmware implementation, objectify heterogeneous device logic, bind communication interfaces, and expose schedulable execution units.
Moreover, even when the functional structure can be generated, timing orchestration remains insufficiently addressed.
A tree-structured device hierarchy induces data dependencies among tasks: state information is aggregated from leaves to the root, while decisions are propagated from the root back to leaves.
Improper scheduling may therefore accumulate latency along this bidirectional propagation process, leading to longer perception-to-decision delay and degraded control performance.

Reducing this latency is difficult because timing orchestration must satisfy both precedence and deadline constraints.
To expose useful scheduling points, control firmware often decomposes each device task into two stages: one stage provides state information for upper-level decision generation, and the other generates local actions according to upper-level commands.
These two stages must execute sequentially within the same device period, introducing intra-task precedence constraints.
As analyzed later, such constraints make even single-core scheduling nontrivial.
Furthermore, execution times of user-defined device logic are usually unavailable at the framework level, so aggressive task switching or preemption may lead to deadline violations.

We observe that robot control firmware has a largely static structure.
Robot components do not appear or disappear during firmware execution, and their mechanical structure and hardware configuration are available at compile time.
This motivates a static timing orchestration strategy: scheduling policies can be constructed before runtime by leveraging framework-level design mechanisms and compile-time structural information.
Such a strategy matches embedded control platforms, where low runtime overhead and deterministic behavior are essential.

Based on these observations, we design a control firmware generation framework with static timing orchestration.
The framework targets complex robotic mechatronic systems and aims to simplify firmware development while providing predictable timing behavior.
The main contributions of this paper are summarized as follows:

\begin{itemize}
    \item We propose an object-oriented control firmware generation framework for tree-structured robotic devices.
    Based on this framework, we formulate the corresponding scheduling problem and explicitly capture the impact of tree-structured device dependencies on intra-tree decision latency.

    \item We design a static scheduling strategy that satisfies both schedulability and intra-device precedence constraints.
    We further derive an upper bound on the intra-tree decision latency under the proposed scheduling strategy.

    \item We provide an open-source implementation of the proposed framework and evaluate its feasibility and scheduling behavior on a real robotic platform.
    The results demonstrate that the framework can support practical firmware development while maintaining predictable timing behavior.
\end{itemize}

%% file: Contents/Formulation.tex
\section{Framework Design}

This section presents the framework design from the perspective of scheduling and formulates the corresponding scheduling problem.
This abstraction allows readers to follow the technical development without requiring detailed understanding of the underlying framework implementation.

To avoid tedious implementation details and conserve space, implementation mechanisms are described using \emph{design pattern} names.
Comprehensive definitions of design patterns can be found in~\cite{gamma1995design, mcconnell2004code}.

We first provide an overview of the framework.
The proposed firmware generation framework adopts a device-centric scheduling model.
Each device object corresponds to a physical robot component and aligns with the structural description file.
Developers define device-specific functionality following the framework interface, while scheduling and timing behaviors are enforced at the framework level without user intervention.


\subsection{Device-Centric Task Model}

The first key idea is to decompose complex robotic workloads into schedulable units.
In conventional robotic control firmware, two types of execution activities usually coexist:
time-triggered control computation and event-triggered communication processing through physical peripherals.
However, scheduling such hybrid-triggered workloads is difficult, since periodic control tasks and peripheral events follow different activation mechanisms.
Therefore, we first simplify the scheduling model by separating communication processing from control computation.

This separation is based on the following observation.
Within one control period, each device object exchanges only a bounded amount of data through physical interfaces, and the cost of moving such data is relatively small.
Thus, we assign higher priority to communication peripherals, while restricting their responsibility to data movement.
In particular, communication paths do not perform protocol parsing or execute control logic.
This design keeps peripheral responses timely and, meanwhile, avoids blocking periodic control computation with complex event-triggered processing.

Furthermore, this separation can be directly realized in the framework implementation.
Each device object registers its communication requirements to the corresponding bus object through the \emph{Observer} pattern.
Meanwhile, each bus object is implemented as a \emph{Singleton}, which handles transmission requests and dispatches received data through interrupts or DMA.
As a result, event-triggered peripheral processing is confined to asynchronous data transfer, whereas device-level control logic is executed in the time-triggered part of the firmware.

Under this implementation, the scheduling problem of the control firmware is largely reduced to time-triggered scheduling.
Therefore, the remaining scheduling analysis can focus on periodic device tasks.
Let \(\delta\) denote the system scheduling tick.
Consider a robotic application with \(N\) device objects \(d_n\), \(n=1,\ldots,N\).
Let $\mathbb{D}=\{d_1,\ldots,d_N\}$ denote the set of device objects instantiated in the robotic application.
Then we define the basic scheduling abstraction used in this paper.

\begin{definition}[Device Task Model]
Each device \(d_n \in \mathbb{D}\) periodically releases a task \(\tau_n\), characterized by
\[
    (\mathcal{C}_n,\mathcal{T}_n,\mathcal{D}_n),
\]
where \(\mathcal{C}_n\) is the worst-case execution time (WCET),
\(\mathcal{T}_n\) is the period,
and \(\mathcal{D}_n\) is the relative deadline.
\(\mathcal{T}_n\) is assumed to be an integer multiple of the scheduling tick \(\delta\).
\end{definition}

We assume that scheduling starts at \(t=0\), when all device objects release their first tasks simultaneously.
Considering the characteristics of embedded control firmware platforms, the remainder of this paper focuses on preemptive single-core scheduling.
Furthermore, to simplify the analysis, we assume implicit deadlines for device tasks:
\[
    \mathcal{T}_n = \mathcal{D}_n,
    \quad \forall n.
\]

At the implementation level, all device objects are additionally required to be globally instantiated.
This requirement follows from the robotic control setting, where the set of physical devices is fixed at system startup.
Moreover, global instantiation enables deterministic startup orchestration, which will be used later to reason about device ordering.


\subsection{Tree-Structured Dependency}

The device task model defined above abstracts each device object as a periodic scheduling unit.
However, these units are not independent.
In structure-driven robot firmware, device objects are organized according to the robot hierarchy described by the structural specification.
Therefore, the next step is to model the dependency relation induced by this hierarchy.

\begin{figure}[h]
\centering
\includegraphics[width=\linewidth]{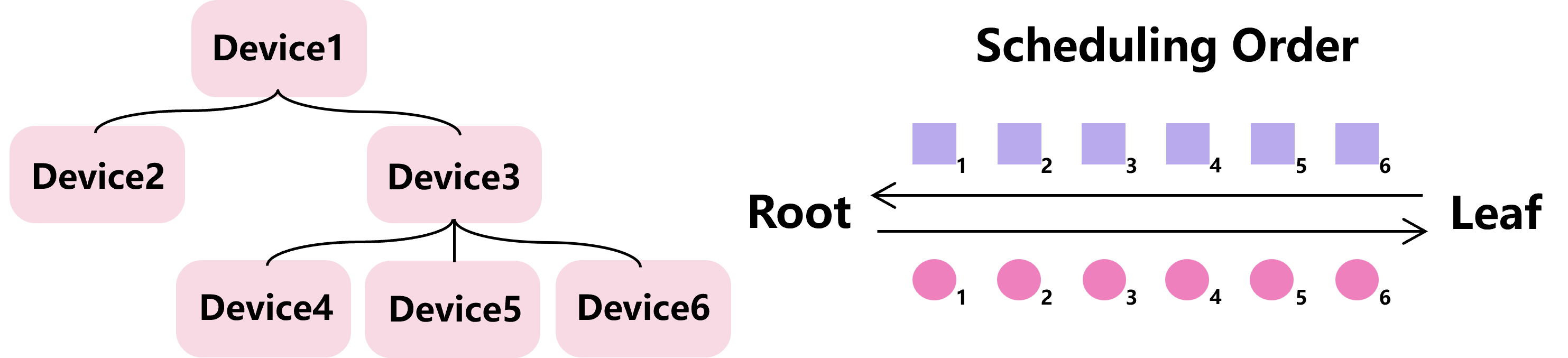}
\caption{Tree-structured device dependencies and an example low-latency scheduling order.}
\label{fig: Device}
\end{figure}

Although structural descriptions provide a natural basis for firmware generation, directly generating complete device logic from them is often impractical.
The main difficulty comes from the diversity of leaf devices.
Consider a manipulator as an example.
The internal motion-control logic of the manipulator is largely independent of the specific motor model, since the controller only needs to issue commands such as position, velocity, or torque to motor objects.
However, replacing the motor hardware may still change implementation-level details, including data types, variables, and configuration parameters.
This mismatch between functional abstraction and hardware-specific implementation prevents structural descriptions from directly specifying complete device functions.

To address this issue, the framework represents hierarchical composition through \emph{dependency injection}.
Leaf device objects are first declared or instantiated, and are then passed as construction arguments to upper-level device objects.
Meanwhile, upper-level classes access lower-level devices through abstract interfaces, which allows the same composite logic to be reused across different hardware implementations.
Consequently, defining device objects from leaves to roots reconstructs the robot functionality while preserving hardware abstraction.

Based on this implementation mechanism, we define the tree-structured dependency among device objects.

\begin{definition}[Tree-structured dependency]
For two devices \(d_{n_1}, d_{n_2}\in\mathbb{D}\), the ordered pair
\[
    (d_{n_1}, d_{n_2})
\]
denotes that \(d_{n_2}\) depends on \(d_{n_1}\) in the structural description.
All such dependency pairs form the edge set \(\mathbb{E}\).
\end{definition}

Due to the structural constraints of the description files, the directed graph \((\mathbb{D},\mathbb{E})\) is assumed to be a forest.
Let \(M\) denote the number of trees.
We denote the \(m\)-th tree by \((\mathbb{D}_m,\mathbb{E}_m)\), where \(m=1,\ldots,M\).

For subsequent latency analysis, we also define leaf-to-root paths in each tree.
Consider a sequence of device indices \((n_p)_{p=1}^{P}\) in the \(m\)-th tree such that
\[
    d_{n_p}\in\mathbb{D}_m,\quad \forall p=1,\ldots,P,
\]
and
\[
    (d_{n_p},d_{n_{p+1}})\in\mathbb{E}_m,\quad \forall p=1,\ldots,P-1.
\]
If \(d_{n_1}\) is a leaf node and \(d_{n_P}\) is the root node of the \(m\)-th tree, then \((n_p)_{p=1}^{P}\) is called a \emph{leaf-to-root path}.
The set of all leaf-to-root paths in the \(m\)-th tree is denoted by \(\mathbb{P}_m\).
This notation will be used later to define decision latency.


\subsection{Intra-Tree Latency Model}

\begin{figure*}[ht]
    \centering
    \subfloat[UML Graph]{
        \includegraphics[height=0.13\textheight,keepaspectratio]{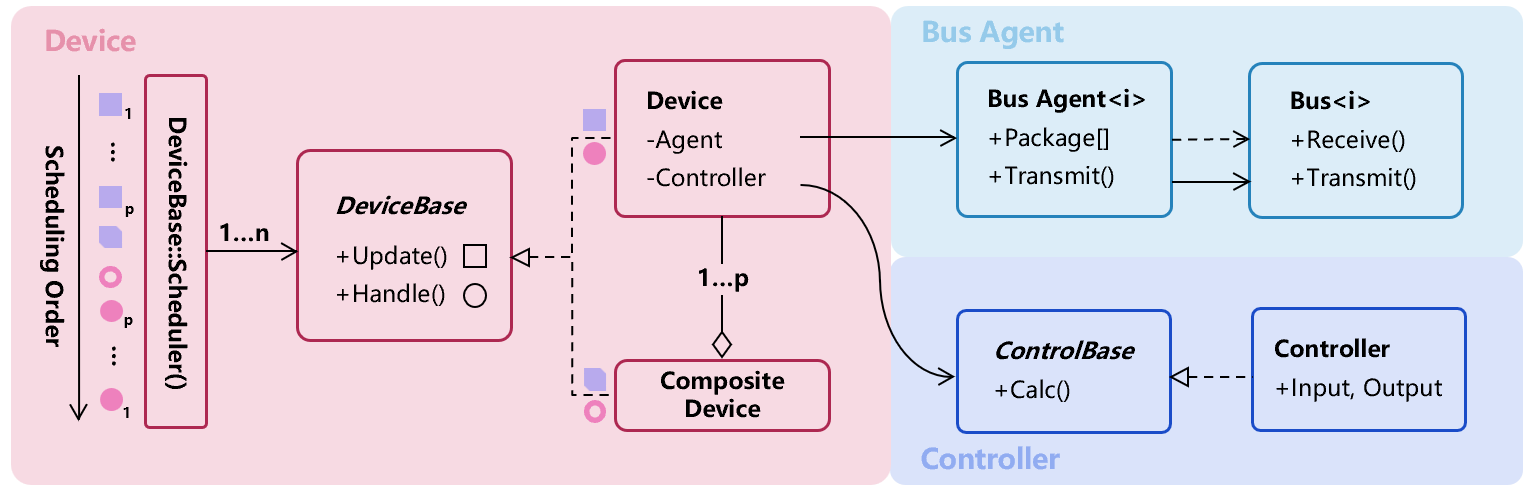}
        \label{fig:UML}
    }
    \hfill
    \subfloat[Structural Diagram]{
        \includegraphics[height=0.13\textheight,keepaspectratio]{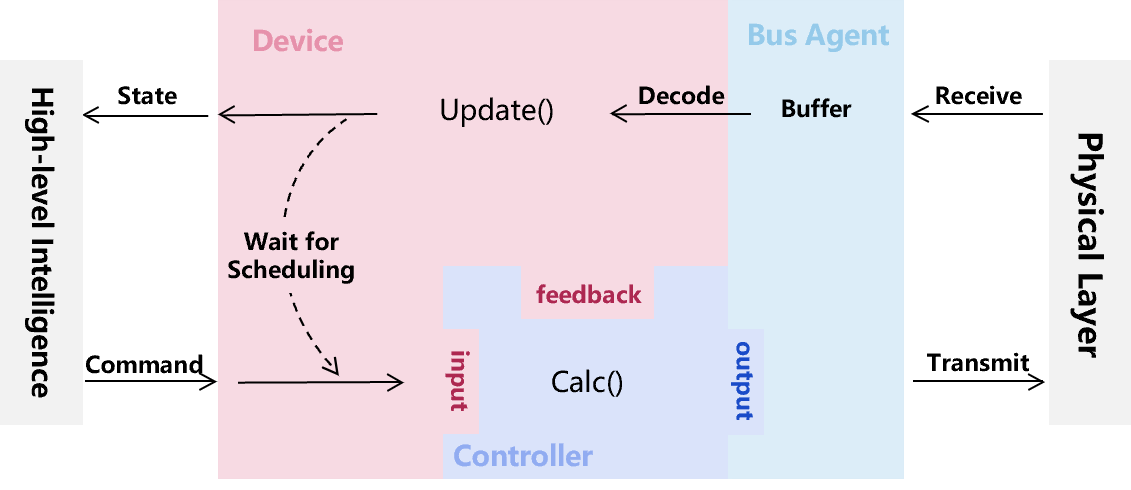}
        \label{fig:overview}
    }
    \caption{Core mechanisms of the framework.
        These mechanisms periodically interact with the physical layer to establish basic control.
        Due to space and scope limitations, this paper focuses on the modeling and scheduling of device objects.}
    \label{fig:core_mechanism}
\end{figure*}

The tree-structured dependency defined above introduces nontrivial timing interactions among device tasks.
In the upward direction, an upper-level device depends on state information produced by lower-level devices.
In the downward direction, decisions generated by upper-level devices are propagated back toward leaf devices and are eventually delivered to cyber-physical interfaces.
Therefore, even if each device task meets its own deadline, scheduling gaps between dependent devices may accumulate along a tree and increase the overall decision delay.

In this paper, we focus on the intra-tree decision latency, defined as the elapsed time from a leaf-side state update to the generation of the corresponding leaf-side actuator command.
This latency includes both computation time and scheduling-induced waiting time.
Moreover, because data must traverse multiple hierarchy levels, scheduling-induced waiting time may accumulate along a leaf-to-root-to-leaf propagation path.
As a result, the execution order of device tasks becomes critical to the timing behavior of the control firmware.

To expose schedulable points inside each device task, the framework decomposes the task of every device into two stages.
The \texttt{Update} stage, denoted by \(\tau_n^{+}\) and \SymbUpdate, updates the internal state of device \(d_n\) using data provided by lower-level devices.
It terminates when the state information required by upper-level devices becomes available.
The \texttt{Handle} stage, denoted by \(\tau_n^{-}\) and \SymbHandle, generates actions for the current cycle based on the updated device state and propagates decision information to lower-level devices.

This two-stage decomposition makes the bidirectional data propagation explicit.
After a lower-level \SymbUpdate stage finishes, the scheduler may execute the \SymbUpdate stage of an upper-level device without waiting for the lower-level \SymbHandle stage.
Similarly, after an upper-level decision is generated, the scheduler may propagate it downward through subsequent \SymbHandle stages.
Thus, the decomposition provides the scheduling flexibility needed to reduce latency accumulation inside a device tree.

Based on this decomposition, the set of schedulable stages is defined as
\[
    \mathbb{T}(\mathbb{D})
    =
    \{
        \tau_n^{+},
        \tau_n^{-}
        \mid
        d_n\in\mathbb{D}
    \}.
\]

To reason about instance-level data propagation, we next introduce release indices and timing functions.
For each device \(d_n\), let
\[
    \mathbb{K}_n
    \triangleq
    \{
        k \in \mathbb{N}
        \mid
        \exists \alpha\in\mathbb{N},\;
        k\delta=\alpha\mathcal{T}_n
    \}
\]
be the set of valid release indices; namely, \(k\in\mathbb{K}_n\) iff \(d_n\) releases a task at time \(k\delta\).

We model the scheduler by two timing functions
\(S,F:\mathbb{T}(\mathbb{D})\times\mathbb{N}\rightarrow\mathbb{R}\).
For a stage type \(\circ\in\{+,-\}\), \(S(\tau_n^{\circ},k)\) and \(F(\tau_n^{\circ},k)\) denote the start time and finish time of the stage instance \((\tau_n^{\circ},k)\), respectively.
Here, \((\tau_n^{\circ},k)\) denotes the instance of stage \(\tau_n^{\circ}\) released by device \(d_n\) at time \(k\delta\).

With the start-time and finish-time mappings defined, we now specify when data produced by one stage becomes visible to another stage.
Combining the tree dependency with scheduling behavior, we define the data transfer relation \(\prec\).

\begin{definition}[Data transfer relation]
Let \(\circ\in\{+,-\}\) denote the stage type.
For two stage instances \((\tau_{n_1}^{\circ},k_1)\) and \((\tau_{n_2}^{\circ},k_2)\), we write
\[
    (\tau_{n_1}^{\circ},k_1)
    \prec
    (\tau_{n_2}^{\circ},k_2)
\]
if the following conditions hold:
\begin{enumerate}
    \item The two devices are adjacent in the corresponding propagation direction:
    \[
        \begin{cases}
            (d_{n_1}, d_{n_2}) \in \mathbb{E}, & \circ = +, \\
            (d_{n_2}, d_{n_1}) \in \mathbb{E}, & \circ = -.
        \end{cases}
    \]

    \item The instance \((\tau_{n_1}^{\circ},k_1)\) is the latest released producer instance whose output is visible to \((\tau_{n_2}^{\circ},k_2)\), i.e.,
    \[
        k_1
        =
        \max
        \left\{
            k\in\mathbb{K}_{n_1}
            \;\middle|\;
            F(\tau_{n_1}^{\circ},k)
            \le
            S(\tau_{n_2}^{\circ},k_2)
        \right\}.
    \]
\end{enumerate}
\end{definition}

The relation \(\prec\) captures the data instance actually consumed by a stage instance.
For the \SymbUpdate stage, data propagates from a child device to its parent.
For the \SymbHandle stage, decision information propagates from a parent device to its child.

With this relation, we can define the latency along a leaf-to-root path.
Consider the \(i\)-th leaf-to-root path in the \(m\)-th tree, denoted by
\[
    (n_p)_{p=1}^{P_i} \in \mathbb{P}_m,
\]
where \(P_i\) is the length of this path, \(d_{n_1}\) is the leaf, and \(d_{n_{P_i}}\) is the root.

Given a terminal release index
\[
    k_{(2P_i-1)}\in\mathbb{K}_{n_1},
\]
corresponding to the terminal leaf-side \SymbHandle stage, the data-transfer relation determines a backward release-index sequence
\[
    (k_j)_{j=1}^{2P_i-1}.
\]
Specifically, the downward decision propagation is first determined backward by
\[
    (\tau_{n_{P_i}}^{-},k_{P_i})
    \prec
    (\tau_{n_{(P_i-1)}}^{-},k_{(P_i+1)})
    \prec
    \cdots
    \prec
    (\tau_{n_{1}}^{-},k_{(2P_i-1)}),
\]
where \(n_{(P_i+v)}=n_{(P_i-v)}\).
Then, the upward state propagation is determined backward by
\[
    (\tau_{n_1}^{+},k_1)
    \prec
    (\tau_{n_2}^{+},k_2)
    \prec
    \cdots
    \prec
    (\tau_{n_{P_i}}^{+},k_{P_i}).
\]

The decision latency of this path ending at release index \(k_{(2P_i-1)}\) is defined as
\[
    F(\tau_{n_1}^{-},k_{(2P_i-1)})
    -
    S(\tau_{n_1}^{+},k_1).
\]
That is, latency is measured from the start of the leaf-side \SymbUpdate stage that produced the state eventually used by the terminal leaf-side \SymbHandle stage, to the completion of that terminal \SymbHandle stage.

We now lift this path-level metric to the tree level.

\begin{definition}[Intra-tree decision latency]
    \label{def:intra_tree_latency}
    For the \(m\)-th tree, the intra-tree decision latency is defined as
    \[
        L_m
        =
        \max_{i:\,(n_p)_{p=1}^{P_i}\in\mathbb{P}_m}
        \;
        \max_{k_{(2P_i-1)}}
        \left(
            F(\tau_{n_1}^{-},k_{2P_i-1})
            -
            S(\tau_{n_1}^{+},k_1)
        \right),
    \]
    where the release indices \(k_1,\ldots,k_{(2P_i-2)}\) are determined by the data transfer relation \(\prec\).
\end{definition}


\subsection{The Scheduling Problem}

Given a scheduling policy \(S\), the finish-time function \(F\) is induced by the resulting execution trace.
Therefore, designing the scheduling policy \(S\) is sufficient to determine both the start-time and finish-time behavior of all stage instances.

We now formulate the scheduling problem for tree-structured device dependencies.
Given the timing functions defined above, the objective is to find a scheduling policy \(S\) that minimizes the worst intra-tree decision latency:
\begin{subequations} \label{eq:original_problem}
    \begin{align}
        & \min_{S} \; \max_{m=1,\ldots,M} \; L_m \label{subeq:latency_cost} \\
    \text{s.t.}\quad
        & F(\tau_n^{-}, k) \le k\delta + \mathcal{D}_n,
        \quad \forall n,\; \forall k \in \mathbb{K}_n, \label{subeq:schedulability} \\
        & F(\tau_n^{+}, k) \le S(\tau_n^{-}, k),
        \quad \forall n,\; \forall k \in \mathbb{K}_n. \label{subeq:precedence}
    \end{align}
\end{subequations}
Here, \eqref{subeq:schedulability} requires each device task to finish before its relative deadline, while \eqref{subeq:precedence} enforces the intra-device precedence from the \SymbUpdate stage to the \SymbHandle stage.

Solving~\eqref{eq:original_problem} is challenging for several reasons.
First, the scheduler has limited information at the framework level.
The execution time \(\mathcal{C}_n\) is generally difficult to obtain, and the framework has limited prior knowledge of the user-defined device set \(\mathbb{D}\) before compilation.
Consequently, using fine-grained priority assignment to resolve intra-tree data dependencies is difficult and may compromise schedulability.

Second, the optimization problem is computationally difficult.
Without preemption, the problem is consistent with the ST-SR-IA problem with in-schedule dependencies defined in~\cite{korsah2013comprehensive}.
The precedence constraint in~\eqref{subeq:precedence} turns the assignment problem from a tractable combinatorial optimization problem into an NP-hard one.
Although we do not claim a direct complexity proof for the preemptive variant considered in this paper, it is reasonable to regard it as computationally hard as well, given its close relation to the non-preemptive case and the additional scheduling choices introduced by preemption.
Therefore, even verifying whether a given scheduling policy is optimal can be difficult.

For these reasons, we do not seek an exact optimum.
Instead, we design a simple and practically implementable scheduling strategy that satisfies the schedulability and precedence constraints while reducing intra-tree decision latency.

%% file: Contents/Scheduling.tex
\section{A Static Scheduling Strategy}

\subsection{Device Registration}

Static scheduling requires an ordering of device objects that is consistent with both the device hierarchy and data-propagation directions. However, the firmware does not directly observe the complete device forest; it only observes construction and registration events. Maintaining an explicit dependency map is possible but fragile, especially when device objects are distributed across multiple translation units, while dynamically storing the full graph would conflict with the deterministic and low-overhead requirements of embedded firmware.

We therefore exploit the temporal implication of \emph{dependency injection}. If a parent device is constructed from references to its child devices, then the child devices must already be available before the parent is constructed. This construction order induces a partial dependency order that can be captured by registration. The remaining issue is translation-unit-level initialization: non-local objects are constructed before entering \texttt{main}, but their relative order across different translation units is generally not controlled, leading to the well-known Static Initialization Order Fiasco~\cite{SIOF}. We first formalize how device objects appear at the translation-unit level.

\begin{definition}[Devices in Translation Units]
The declarations and definitions of device objects in \(\mathbb{D}\) are distributed over \(Q\) translation units.
For each translation unit \(q\), let \(\mathbb{D}^{(q)}\subseteq\mathbb{D}\) denote the set of device objects declared or defined in it.
We denote their source-level appearance order by
\[
    \mathbf{U}_q
    \triangleq
    \left(d^{(q)}_r\right)_{r=1}^{|\mathbb{D}^{(q)}|},
\]
where \(d^{(q)}_r\) is the \(r\)-th device object declared or defined in translation unit \(q\).
\end{definition} 

According to \S~6.6.3 of C++17~\cite{cpp17} and mainstream compiler implementations, non-local objects are constructed before entering the main function.
We denote the resulting device initialization order by
\[
    \mathbf{O}
    \triangleq
    \left(d_{n_s}\right)_{s=1}^{N},
\]
where \((n_s)_{s=1}^{N}\) is a permutation of \(\{1,\ldots,N\}\), and \(s\) denotes the initialization rank.
Accordingly, we define the initialization-rank mapping
\[
    \mathcal{I}(d_{n_s}) = s .
\]

We now state the ordering property used by the proposed registration mechanism.
\begin{theorem}[Initialization Order Preservation \label{thm: initialization-order-preservation}]
Assume that the program is well-formed under C++17~\cite{cpp17} and follows the \emph{dependency-injection} discipline described above.
Then the following properties hold:
\begin{enumerate}
    \item For every translation unit \(q\), the source-level device sequence \(\mathbf{U}_q\) is a subsequence of the global initialization order \(\mathbf{O}\).
    \item The initialization order \(\mathbf{O}\) is consistent with the device forest. That is, for any two devices \(d_{n_1},d_{n_2}\in\mathbb{D}\), if
    \[
        (d_{n_1},d_{n_2})\in\mathbb{E},
    \]
    then
    \[
        \mathcal{I}(d_{n_1}) < \mathcal{I}(d_{n_2}).
    \]
\end{enumerate}
\end{theorem}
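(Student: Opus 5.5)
We would argue directly from the C++17 rules for dynamic initialization of non-local variables with static storage duration (\S~6.6.3 of~\cite{cpp17}), treating each device object in \(\mathbb{D}\) as such a variable. The device constructor performs the registration, so the position of \(d_n\) in \(\mathbf{O}\) is the moment its constructor completes, and \(\mathcal{I}\) is fixed by the order in which those constructors finish. The two claims are proved separately.

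\textbf{Part 1.} We use the standard guarantee that, within a single translation unit, ordered non-local variables are initialized in the order of their definitions. Take \(d^{(q)}_{r_1}\) and \(d^{(q)}_{r_2}\) with \(r_1<r_2\) in \(\mathbf{U}_q\). Their dynamic initializations are sequenced in that order; if they run on different threads, the initialization of the earlier one strongly happens before that of the later. Hence \(\mathcal{I}(d^{(q)}_{r_1})<\mathcal{I}(d^{(q)}_{r_2})\). Since \(\mathbf{O}\) is a permutation of \(\mathbb{D}\), this shows that \(\mathbf{U}_q\) appears in \(\mathbf{O}\) as a subsequence.

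One subtlety is that a device may be declared (e.g.\ \texttt{extern}) in a unit without being defined there. We will therefore read \(\mathbf{U}_q\) as listing each object at its defining unit only. Alternatively, we can note that a declaration-only entry gives no constraint inside \(q\) and argue that the subsequence claim concerns the definitions.

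\textbf{Part 2.} Let \((d_{n_1},d_{n_2})\in\mathbb{E}\). By the dependency-injection discipline, the constructor of \(d_{n_2}\) takes a reference to the fully constructed \(d_{n_1}\) as an argument and uses it. A well-formed program without undefined behaviour must not use an object before its lifetime begins. So \(d_{n_1}\)'s constructor must complete before \(d_{n_2}\)'s constructor completes.

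\begin{itemize}
\item If both objects lie in the same translation unit, the dependency forces \(d_{n_1}\) to be defined earlier, and Part 1 applies.
\item If they lie in different translation units, the discipline must supply the ordering, since otherwise we would have the Static Initialization Order Fiasco, which is excluded by well-formedness plus the discipline. The concrete mechanism is construct-on-first-use: the child is accessed through a function-local static or an equivalent accessor. Its construction, and hence its registration, is then triggered inside the evaluation of the parent's constructor arguments, strictly before the parent's constructor body registers the parent.
\end{itemize}

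In either case registration of \(d_{n_1}\) precedes that of \(d_{n_2}\), giving \(\mathcal{I}(d_{n_1})<\mathcal{I}(d_{n_2})\).

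\textbf{Main obstacle.} The cross-translation-unit case of Part 2 is the hard step. The C++ standard alone leaves the relative order across units unspecified, so the argument depends on making the dependency-injection discipline precise enough to exclude the fiasco. It also needs registration to occur at the completion of construction (or at a point sequenced after argument evaluation), so that \(\mathcal{I}\) reflects finished initialization. We plan to state this as an explicit modeling assumption on registration timing.
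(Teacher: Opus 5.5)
Your same-unit reasoning (name lookup puts the child first, and ordered initialization follows definition order within a unit) matches the paper's. The cross-unit case of Part~2, which you rightly call the crux, is not established.

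The claim that the fiasco is ``excluded by well-formedness plus the discipline'' does no work. Well-formedness is a compile-time property that the fiasco does not violate, and describing the injected child as ``fully constructed'' assumes the conclusion. Absence of undefined behaviour does not force the order either. A dependency-injected parent typically just binds or stores a reference to the child, which the standard permits before the child's lifetime begins (barring virtual bases).

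Your fallback, construct-on-first-use, is an additional hypothesis that leaves the paper's model. The paper requires devices to be globally instantiated, and \(\mathbf{O}\) is the pre-\texttt{main} dynamic-initialization order of non-local objects. A function-local static is instead initialized at the first call of its accessor. This also conflicts with Part~1: a lazily constructed device is built whenever another unit's initializer first reaches its accessor, not at its source position. It therefore either fails the subsequence property or drops out of every \(\mathbf{U}_q\).

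The missing idea is the one the paper's proof rests on, and it contradicts your premise that the standard leaves cross-unit order unspecified. C++17 \S~6.6.3 gives \texttt{inline} variables \emph{partially-ordered} initialization. Suppose \(V\) is such a variable, \(W\) does not have unordered initialization, and \(V\) is defined before \(W\) in every translation unit in which \(W\) is defined. Then the initialization of \(V\) is sequenced before that of \(W\) (with extra threads, it strongly happens before).

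Now suppose every device shared across units is defined \texttt{inline} in its header, or each tree lives in a single unit. Under dependency injection the child's definition then precedes the parent's in every unit that defines the parent. Hence \(\mathcal{I}(d_{n_1})<\mathcal{I}(d_{n_2})\) follows while all devices remain namespace-scope globals. Equivalently, an inline child is defined in the parent's own unit and occurs in that \(\mathbf{U}_q\) ahead of the parent, so the cross-unit edge reduces to your same-unit case. Restricting \(\mathbf{U}_q\) to a single defining unit, as you propose, discards exactly this information.

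One further point: name lookup forces only a prior \emph{declaration}, not a prior definition. An \texttt{extern} forward declaration followed by a later definition would reverse the order under ordered initialization. You must therefore also require each device to be defined before its first use, not merely declared.
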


\begin{proof}
This result strongly depends on the framework mechanism and the partially-ordered initialization feature introduced in C++17.
Appendix~\ref{appendix: initialization-order} provides a feasible implementation path that satisfies this requirement.
\end{proof}

Theorem~\ref{thm: initialization-order-preservation} provides a useful perspective.
The framework does not need to explicitly reconstruct the complete dependency forest or require developers to manually specify a traversal order.
Instead, under the proposed construction discipline, the initialization order already provides a deterministic child-to-parent order.
We next use this order to design the static scheduling algorithm.


\subsection{Static Device Scheduling}

The registration order provides a deterministic order for device objects.
Before scheduling starts, the framework constructs all scheduling sequences statically from this order, without maintaining the device forest at runtime.

Devices are first partitioned by their periods.
Let \(W\) be the number of distinct device periods \(\mathcal{T}\).
For each \(w=1,\ldots,W\), let \(\mathbf{Q}_w\) denote the subsequence of the initialization order \(\mathbf{O}\) containing all devices with the same period \(\mathcal{T}^{(w)}\):
\[
    \mathbf{Q}_w
    \triangleq
    \left(
        d_{n_s}\in\mathbf{O}
        \mid
        \mathcal{T}_{n_s}=\mathcal{T}^{(w)}
    \right),
\]
where \(\mathcal{T}^{(w)}\) is the common period of devices in \(\mathbf{Q}_w\).
The periods are ordered as
\[
    \mathcal{T}^{(1)}
    <
    \mathcal{T}^{(2)}
    <
    \cdots
    <
    \mathcal{T}^{(W)} .
\]

The framework assigns static priorities to these sequences according to Rate-Monotonic Scheduling (RMS).
Thus, a sequence with a shorter period has a higher priority, and a higher-priority sequence may preempt the execution of a lower-priority sequence.

Within each sequence, the execution order is fixed.
For
\[
    \mathbf{Q}_w
    =
    (d_{n_1},d_{n_2},\ldots,d_{n_{|\mathbf{Q}_w|}}),
\]
the scheduler executes
\begin{equation} \label{eq: scheduling order}
    \tau_{n_1}^{+},
    \tau_{n_2}^{+},
    \ldots,
    \tau_{n_{|\mathbf{Q}_w|}}^{+},
    \tau_{n_{|\mathbf{Q}_w|}}^{-},
    \ldots,
    \tau_{n_2}^{-},
    \tau_{n_1}^{-}.
\end{equation}

Since each period bucket \(\mathbf{Q}_w\) preserves the initialization order, same-period dependencies follow the desired child-to-parent order for \SymbUpdate and parent-to-child order for \SymbHandle.

Overall, the scheduling policy is fully determined before runtime: period buckets, their priorities, and their internal execution orders are all fixed before scheduling begins.


\subsection{Schedulability Analysis}

We now analyze the schedulability of the proposed static scheduling policy.
The analysis first abstracts event-triggered bus responses as conservative periodic workloads, and then combines them with the period buckets defined above.

Let \(N_{\mathcal{B}}\) denote the number of bus objects used in the firmware.
For each bus object \(b_\beta\), where \(\beta=1,\ldots,N_{\mathcal{B}}\), let \(\mathscr{R}_\beta\) be its baud rate and \(\ell_\beta\) be the minimum frame length in bits.
The minimum response interval is
\[
    \mathcal{T}^{\mathcal{B}}_\beta
    =
    \frac{\ell_\beta}{\mathscr{R}_\beta}.
\]
The corresponding service routine performs only data movement, with WCET \(\mathcal{C}^{\mathcal{B}}_\beta\).

\begin{lemma}[Periodic Abstraction of Bus Responses]
\label{lemma:bus_periodic_abstraction}
For schedulability analysis, each bus object \(b_\beta\) can be conservatively modeled as a periodic workload
\[
    \left(
        \mathcal{C}^{\mathcal{B}}_\beta,
        \mathcal{P}^{\mathcal{B}}_\beta,
        \mathcal{T}^{\mathcal{B}}_\beta
    \right),
\]
where
\[
    \mathcal{P}^{\mathcal{B}}_\beta
    =
    \min\{\mathcal{T}^{\mathcal{B}}_\beta,\delta\},
\]
and \(\delta\) is the system scheduling tick.
\end{lemma}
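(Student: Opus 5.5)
The plan is to show that the arrival pattern of bus service routines is dominated, in every time window, by the periodic workload in the statement. The lemma then follows from standard interference arguments for fixed-priority preemptive scheduling. The triple \(\left(\mathcal{C}^{\mathcal{B}}_\beta,\mathcal{P}^{\mathcal{B}}_\beta,\mathcal{T}^{\mathcal{B}}_\beta\right)\) will be read as a sporadic/periodic task with WCET \(\mathcal{C}^{\mathcal{B}}_\beta\), minimum inter-arrival \(\mathcal{T}^{\mathcal{B}}_\beta\), and release granularity \(\mathcal{P}^{\mathcal{B}}_\beta\). In the analysis, a bus task with these parameters is treated as having the highest priority, above every period bucket \(\mathbf{Q}_w\), consistent with the design choice that communication peripherals preempt control computation.

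\textbf{Step 1: arrival bound.} A service routine of \(b_\beta\) is triggered only upon completion of a frame transfer (interrupt or DMA event). Consecutive frames on the same bus are serialized physically, so two consecutive triggers are separated by at least the transmission time of the shortest frame, \(\ell_\beta/\mathscr{R}_\beta=\mathcal{T}^{\mathcal{B}}_\beta\). Hence in any half-open interval of length \(t\) there are at most \(\lceil t/\mathcal{T}^{\mathcal{B}}_\beta\rceil\) activations. This is exactly the arrival curve of a sporadic task with period \(\mathcal{T}^{\mathcal{B}}_\beta\). It is well known that the critical-instant workload of a sporadic task is bounded by that of the periodic task with the same period released synchronously with the analyzed job.

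\textbf{Step 2: the role of \(\mathcal{P}^{\mathcal{B}}_\beta=\min\{\mathcal{T}^{\mathcal{B}}_\beta,\delta\}\).} Device tasks are released only at tick boundaries \(k\delta\), so the analysis is effectively carried out at tick granularity. Two cases arise:
\begin{itemize}
\item If \(\mathcal{T}^{\mathcal{B}}_\beta\ge\delta\), bus events are no denser than ticks and the release granularity is \(\delta\).
\item If \(\mathcal{T}^{\mathcal{B}}_\beta<\delta\), several events can occur within one tick. Their offset relative to tick boundaries is then unconstrained, so the worst-case alignment must be considered at the finer granularity \(\mathcal{T}^{\mathcal{B}}_\beta\).
\end{itemize}
In both cases, taking the minimum gives a phase resolution fine enough that any real arrival pattern can be shifted to a worst-case synchronous release at \(t=0\) without reducing interference. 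I would argue this by the usual shifting argument: moving each bus release earlier to the nearest admissible point never decreases the demand in the window \([0,t]\) ending at the analyzed deadline.

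\textbf{Step 3: conservativeness.} Combining Steps 1 and 2, the interference imposed on any device stage in \([0,t]\) is at most \(\lceil t/\mathcal{T}^{\mathcal{B}}_\beta\rceil\,\mathcal{C}^{\mathcal{B}}_\beta\). This is precisely the term a periodic task \(\left(\mathcal{C}^{\mathcal{B}}_\beta,\mathcal{P}^{\mathcal{B}}_\beta,\mathcal{T}^{\mathcal{B}}_\beta\right)\) contributes to response-time or utilization tests. Therefore any schedulability test that passes with this abstraction also holds for the real system. The main obstacle is making Step 2 precise, since the paper does not formally define what \(\mathcal{P}^{\mathcal{B}}_\beta\) means in the triple. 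I would fix its interpretation as the release jitter/phase granularity and show that it is no coarser than either the tick or the bus interval, so that it only ever over-approximates.
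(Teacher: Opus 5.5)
There is a genuine gap, and it comes from how you read the triple. By analogy with the device model \((\mathcal{C}_n,\mathcal{T}_n,\mathcal{D}_n)\), and as the paper uses it downstream, \(\mathcal{P}^{\mathcal{B}}_\beta\) is the \emph{period} of the abstract workload, and \(\mathcal{T}^{\mathcal{B}}_\beta\) sits in the deadline slot. It is not a release or phase granularity. The downstream uses are the utilization term \(\mathcal{C}^{\mathcal{B}}_\beta/\mathcal{P}^{\mathcal{B}}_\beta\) in Theorem~\ref{thm:schedulability_bound} and the interference term \(\lceil \widehat{\mathcal{R}}_n^{\circ}/\mathcal{P}^{\mathcal{B}}_\beta\rceil\mathcal{C}^{\mathcal{B}}_\beta\) in Lemma~\ref{lem:response_time_bounds}.

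So what has to be shown is that the real bus demand in any window of length \(L\) is at most \(\lceil L/\mathcal{P}^{\mathcal{B}}_\beta\rceil\mathcal{C}^{\mathcal{B}}_\beta\). Your Step 1 correctly gives the sporadic bound \(\lceil L/\mathcal{T}^{\mathcal{B}}_\beta\rceil\mathcal{C}^{\mathcal{B}}_\beta\). Your Step 3 then declares this to be ``precisely the term'' of the abstract task, which is false under the paper's convention. The missing step is that \(\mathcal{P}^{\mathcal{B}}_\beta=\min\{\mathcal{T}^{\mathcal{B}}_\beta,\delta\}\le\mathcal{T}^{\mathcal{B}}_\beta\) implies \(\lceil L/\mathcal{T}^{\mathcal{B}}_\beta\rceil\mathcal{C}^{\mathcal{B}}_\beta\le\lceil L/\mathcal{P}^{\mathcal{B}}_\beta\rceil\mathcal{C}^{\mathcal{B}}_\beta\). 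That one-line monotonicity is the whole content of the paper's proof. The abstraction can be very pessimistic: for example, \(\mathcal{T}^{\mathcal{B}}_\beta=10\delta\) gives \(\mathcal{P}^{\mathcal{B}}_\beta=\delta\). It only needs to be safe, though. Under your reading, \(\mathcal{P}^{\mathcal{B}}_\beta\) plays no role in any test, so you never establish the statement the later results rely on.

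Because of this misreading, your Step 2 is unnecessary, and as you note yourself it is not made precise. No phase-resolution or shifting argument is needed, since the lemma is a pure demand-domination statement.

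The \(\delta\) in the minimum also serves a different purpose from the one you assign it. Every device period is an integer multiple of \(\delta\), so \(\mathcal{P}^{\mathcal{B}}_\beta\le\delta\) guarantees that under RMS each bus workload has priority no lower than any period bucket \(\mathbf{Q}_w\). The paper uses this, right after the lemma, to justify placing bus routines above all device buckets. You instead assumed the highest priority as an external design choice.
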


\begin{proof}
Bus responses are separated by at least \(\mathcal{T}^{\mathcal{B}}_\beta\), and therefore form a sporadic workload.
Replacing this workload with a periodic workload of period \(\mathcal{P}^{\mathcal{B}}_\beta=\min\{\mathcal{T}^{\mathcal{B}}_\beta,\delta\}\) is conservative since \(\mathcal{P}^{\mathcal{B}}_\beta\le\mathcal{T}^{\mathcal{B}}_\beta\).
For any interval of length \(L\), the processor demand of the original workload satisfies
\[
    \left\lceil
        \frac{L}{\mathcal{T}^{\mathcal{B}}_\beta}
    \right\rceil
    \mathcal{C}^{\mathcal{B}}_\beta
    \le
    \left\lceil
        \frac{L}{\mathcal{P}^{\mathcal{B}}_\beta}
    \right\rceil
    \mathcal{C}^{\mathcal{B}}_\beta .
\]
Thus, the periodic abstraction safely upper-bounds the processor demand of bus responses.
\end{proof}

Next, we abstract each period bucket as one periodic workload.
For a bucket \(\mathbf{Q}_w\), let
\[
    \mathcal{C}^{\mathbf{Q}}_w
    \triangleq
    \sum_{d_n\in\mathbf{Q}_w}\mathcal{C}_n
\]
denote its aggregate execution time, where \(\mathcal{C}_n\) includes both the \SymbUpdate and \SymbHandle stages of device \(d_n\).
Since all devices in \(\mathbf{Q}_w\) share the same period \(\mathcal{T}^{(w)}\), the bucket can be modeled as a periodic workload
\[
    \tau^{\mathbf{Q}}_w
    =
    \left(
        \mathcal{C}^{\mathbf{Q}}_w,
        \mathcal{T}^{(w)},
        \mathcal{T}^{(w)}
    \right).
\]

Since every device period is an integer multiple of the scheduling tick \(\delta\), and \(\mathcal{P}^{\mathcal{B}}_\beta \le \delta\), bus workloads have priorities no lower than device buckets under Rate-Monotonic Scheduling (RMS).
This is consistent with the implementation, where bus service routines are assigned higher priority than periodic control execution.

\begin{theorem}[Schedulability Bound]
\label{thm:schedulability_bound}
Consider \(N_{\mathcal{B}}\) bus objects and \(W\) period buckets scheduled according to the proposed static policy.
A sufficient condition for schedulability is
\[
    \sum_{\beta=1}^{N_{\mathcal{B}}}
    \frac{\mathcal{C}^{\mathcal{B}}_\beta}{\mathcal{P}^{\mathcal{B}}_\beta}
    +
    \sum_{w=1}^{W}
    \frac{\mathcal{C}^{\mathbf{Q}}_w}{\mathcal{T}^{(w)}}
    \le
    (N_{\mathcal{B}}+W)
    \left(
        2^{1/(N_{\mathcal{B}}+W)}-1
    \right).
\]
\end{theorem}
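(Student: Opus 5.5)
The plan is to reduce the proposed static policy to a classical fixed-priority rate-monotonic system of $N_{\mathcal{B}}+W$ independent, implicit-deadline, synchronously released periodic tasks. Once that reduction holds, we invoke the Liu--Layland utilization bound: a set of $n$ such tasks under RMS is schedulable whenever $\sum_i C_i/T_i \le n(2^{1/n}-1)$.

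\textbf{Building the task set.} First, by Lemma~\ref{lemma:bus_periodic_abstraction}, each bus object $b_\beta$ is replaced by a periodic workload with execution time $\mathcal{C}^{\mathcal{B}}_\beta$ and period $\mathcal{P}^{\mathcal{B}}_\beta$. Its processor demand in any window dominates the true sporadic demand, so schedulability of the abstracted system implies schedulability of the real one. Second, each bucket $\mathbf{Q}_w$ is treated as a single job sequence $\tau^{\mathbf{Q}}_w=(\mathcal{C}^{\mathbf{Q}}_w,\mathcal{T}^{(w)},\mathcal{T}^{(w)})$. This is legitimate because all devices in $\mathbf{Q}_w$ release simultaneously at multiples of $\mathcal{T}^{(w)}$. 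The fixed order \eqref{eq: scheduling order} executes all their stages back to back at a single priority level. The bucket's total work per period is therefore exactly $\sum_{d_n\in\mathbf{Q}_w}\mathcal{C}_n$. Internal ordering does not change when the last stage $\tau^{-}_{n_1}$ completes relative to the bucket's response time, so the bucket finishing by $k\delta+\mathcal{T}^{(w)}$ implies every device meets \eqref{subeq:schedulability}. Intra-device precedence \eqref{subeq:precedence} holds by construction, since each $\tau_n^{+}$ precedes $\tau_n^{-}$ in \eqref{eq: scheduling order}.

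\textbf{Checking rate-monotonic priorities.} The priority assignment must be verified to be rate-monotonic over the combined set. Buckets are ordered by $\mathcal{T}^{(w)}$. Bus workloads have $\mathcal{P}^{\mathcal{B}}_\beta\le\delta\le\mathcal{T}^{(1)}$, so giving them higher priority agrees with RMS, with ties broken arbitrarily, which Liu--Layland permits. With this in place, the hypothesis of the theorem is exactly the Liu--Layland condition with $n=N_{\mathcal{B}}+W$. Every bucket job, and hence every device task, therefore meets its deadline.

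\textbf{Main obstacle.} The delicate step is justifying the bucket aggregation and the Liu--Layland preconditions. Three points need care.
\begin{enumerate}
\item Synchronous release of the bus workloads is not guaranteed. I would argue that the critical instant is the worst case anyway, so the bound still applies to arbitrary offsets.
\item The framework requires that a bucket cannot overrun into its next release. This follows inductively, since each job completes before the next release under the bound.
\item Preemption of a bucket by higher-priority work only delays the bucket as a whole, which is precisely the single-task model.
\end{enumerate}
I would state these three points explicitly as lemmas before citing Liu--Layland.
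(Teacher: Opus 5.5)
Your proposal is correct and follows the paper's proof. You model each bus via Lemma~\ref{lemma:bus_periodic_abstraction} as a periodic task with period $\mathcal{P}^{\mathcal{B}}_\beta$, collapse each bucket $\mathbf{Q}_w$ into the single implicit-deadline task $\tau^{\mathbf{Q}}_w$, check that the priority order is rate-monotonic, and apply the Liu--Layland bound with $N_{\mathcal{B}}+W$ tasks. The three extra points you plan to state as lemmas are sound; the paper leaves them implicit: independence from release offsets via the critical instant, no overrun into the next release, and preemption delaying the bucket as a whole.
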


\begin{proof}
By Lemma~\ref{lemma:bus_periodic_abstraction}, each bus object \(b_\beta\) is conservatively modeled as a periodic workload with execution time \(\mathcal{C}^{\mathcal{B}}_\beta\) and period \(\mathcal{P}^{\mathcal{B}}_\beta\).
Meanwhile, each period bucket \(\mathbf{Q}_w\) is modeled as a periodic workload with execution time \(\mathcal{C}^{\mathbf{Q}}_w\), period \(\mathcal{T}^{(w)}\), and implicit deadline.
Thus, the firmware is abstracted as \(N_{\mathcal{B}}+W\) periodic workloads scheduled under RMS.
Applying the Liu--Layland utilization bound~\cite{liu1973scheduling} yields the stated sufficient condition.
\end{proof}


\subsection{Summary of the Static Policy}

Using the ordering property in Theorem~\ref{thm: initialization-order-preservation}, we design a fully static scheduling policy.
The period buckets \(\mathbf{Q}_w\), their RMS priorities, and their internal execution orders are all determined before scheduling starts.
The execution order in~\eqref{eq: scheduling order} enforces the intra-device precedence constraint in~\eqref{subeq:precedence}, since each \SymbHandle stage is executed only after the corresponding \SymbUpdate stage.
Meanwhile, Theorem~\ref{thm:schedulability_bound} provides a sufficient condition under which the deadline constraint in~\eqref{subeq:schedulability} is satisfied.

Therefore, the remaining question is the latency cost of this static policy.
In the next section, we derive an upper bound on the intra-tree decision latency \(L_m\) of the proposed scheduling strategy.

%% file: Contents/Latency.tex
\section{Latency Analysis}


\subsection{Response-Time Bounds}

Before deriving latency bounds, we first analyze the response-time bounds of \SymbUpdate and \SymbHandle.
For device \(d_n\) and phase \(\tau_n^\circ\), where \(\circ\in\{+,-\}\), define its response time at release index \(k\in\mathbb{K}_n\) as
\[
    \mathcal{R}_n^\circ(k)
    \triangleq
    F(\tau_n^\circ,k)-k\delta .
\]
This response time depends on the specific release index \(k\), and is therefore inconvenient for deriving release-independent latency bounds.
Instead, we seek two uniform upper bounds \(\widehat{\mathcal{R}}_n^+\) and \(\widehat{\mathcal{R}}_n^-\) such that
\[
    \mathcal{R}_n^+(k)\le\widehat{\mathcal{R}}_n^+,
    \qquad
    \mathcal{R}_n^-(k)\le\widehat{\mathcal{R}}_n^-,
    \quad \forall k\in\mathbb{K}_n .
\]

\begin{lemma}[Response-Time Bounds]
\label{lem:response_time_bounds}
Under the proposed static scheduling policy, feasible response-time bounds \(\widehat{\mathcal{R}}_n^+\) and \(\widehat{\mathcal{R}}_n^-\) are given by the least fixed points of
\[
    \widehat{\mathcal{R}}_n^+
    =
    \mathcal{B}_n^+
    +
    \sum_{\beta=1}^{N_{\mathcal{B}}}
    \left\lceil
        \frac{\widehat{\mathcal{R}}_n^+}{\mathcal{P}^{\mathcal{B}}_\beta}
    \right\rceil
    \mathcal{C}^{\mathcal{B}}_\beta
    +
    \sum_{\substack{w:\mathcal{T}^{(w)}<\mathcal{T}_n}}
    \left\lceil
        \frac{\widehat{\mathcal{R}}_n^+}{\mathcal{T}^{(w)}}
    \right\rceil
    \mathcal{C}^{\mathbf{Q}}_w ,
\]
and
\[
    \widehat{\mathcal{R}}_n^-
    =
    \mathcal{B}_n^-
    +
    \sum_{\beta=1}^{N_{\mathcal{B}}}
    \left\lceil
        \frac{\widehat{\mathcal{R}}_n^-}{\mathcal{P}^{\mathcal{B}}_\beta}
    \right\rceil
    \mathcal{C}^{\mathcal{B}}_\beta
    +
    \sum_{\substack{w:\mathcal{T}^{(w)}<\mathcal{T}_n}}
    \left\lceil
        \frac{\widehat{\mathcal{R}}_n^-}{\mathcal{T}^{(w)}}
    \right\rceil
    \mathcal{C}^{\mathbf{Q}}_w ,
\]
where
\[
    \mathcal{B}_n^+
    \triangleq
    \mathcal{C}_n^+
    +
    \sum_{\substack{u:\mathcal{T}_u=\mathcal{T}_n\\
            \mathcal{I}(d_u)<\mathcal{I}(d_n)}}
    \mathcal{C}_u^+
\]
and
\[
    \mathcal{B}_n^-
    \triangleq
    \sum_{\substack{u:\mathcal{T}_u=\mathcal{T}_n}}
    \mathcal{C}_u^+
    +
    \sum_{\substack{u:\mathcal{T}_u=\mathcal{T}_n\\
            \mathcal{I}(d_u)>\mathcal{I}(d_n)}}
    \mathcal{C}_u^-
    +
    \mathcal{C}_n^- .
\]
Here, \(\mathcal{C}_u^+\) and \(\mathcal{C}_u^-\) denote the WCETs of \SymbUpdate and \SymbHandle of device \(d_u\), respectively.
\end{lemma}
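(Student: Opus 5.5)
This is a standard response-time analysis (RTA) argument in the style of Joseph--Pandya, adapted to the fixed intra-bucket order~\eqref{eq: scheduling order}. Fix a device \(d_n\) in bucket \(\mathbf{Q}_w\) with \(\mathcal{T}_n=\mathcal{T}^{(w)}\), and a release index \(k\in\mathbb{K}_n\). All devices in \(\mathbf{Q}_w\) release simultaneously at \(k\delta\), because they share one period and all first releases occur at \(t=0\).

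\textbf{Step 1: own-bucket workload.} We assume the system is schedulable, for instance under Theorem~\ref{thm:schedulability_bound}. Then the previous job of bucket \(\mathbf{Q}_w\) has completed by \(k\delta\), since its deadline is \(k\delta\). The bucket's \(k\)-th instance therefore executes the sequence~\eqref{eq: scheduling order} from its beginning, and within the bucket no stage is preempted by another stage of the same bucket. Consequently, before \((\tau_n^+,k)\) finishes, the bucket must execute exactly the \SymbUpdate stages of devices \(d_u\) with \(\mathcal{I}(d_u)\le\mathcal{I}(d_n)\). This holds because \(\mathbf{Q}_w\) is a subsequence of \(\mathbf{O}\), so intra-bucket position coincides with initialization rank. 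This workload is \(\mathcal{B}_n^+\). Similarly, before \((\tau_n^-,k)\) finishes, the bucket executes all \SymbUpdate stages, then the \SymbHandle stages of devices with larger rank (traversed in reverse order), then \(\tau_n^-\) itself, which gives \(\mathcal{B}_n^-\). Lower-priority buckets contribute nothing, because the policy is fully preemptive RMS with no shared non-preemptive sections. Bus routines only move data, so they add no blocking term.

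\textbf{Step 2: higher-priority interference.} Consider the level-\(w\) busy window that starts at \(k\delta\). The only tasks that can preempt bucket \(w\) are bus workloads and buckets with \(\mathcal{T}^{(w')}<\mathcal{T}_n\). By Lemma~\ref{lemma:bus_periodic_abstraction}, the bus demand in any window of length \(R\) is at most \(\lceil R/\mathcal{P}^{\mathcal{B}}_\beta\rceil\mathcal{C}^{\mathcal{B}}_\beta\). Each higher-priority bucket releases its whole sequence at multiples of \(\mathcal{T}^{(w')}\), so its demand is at most \(\lceil R/\mathcal{T}^{(w')}\rceil\mathcal{C}^{\mathbf{Q}}_{w'}\). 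The synchronous release at \(t=0\) and the critical-instant argument of Liu--Layland make these ceiling counts valid upper bounds for every \(k\), not only \(k=0\).

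\textbf{Step 3: fixed point.} Let \(R=F(\tau_n^\circ,k)-k\delta\). Throughout \([k\delta, k\delta+R)\) the processor is busy executing either the own-bucket workload from Step 1 or the higher-priority interference from Step 2. Hence \(R\le\mathcal{B}_n^\circ+I(R)\), where \(I(\cdot)\) is the monotone nondecreasing interference function above. Iterating \(x_{j+1}=\mathcal{B}_n^\circ+I(x_j)\) from \(x_0=\mathcal{B}_n^\circ\) produces a nondecreasing sequence. This sequence converges to the least fixed point \(\widehat{\mathcal{R}}_n^\circ\) whenever the utilization is below one, which is guaranteed under the condition of Theorem~\ref{thm:schedulability_bound}. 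A standard induction then shows that any \(R\) satisfying \(R\le\mathcal{B}_n^\circ+I(R)\), with \(R\) the first completion time, is bounded by that least fixed point.

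\textbf{Main obstacle.} The delicate point is Step 1's claim that the bucket's \(k\)-th sequence starts fresh at \(k\delta\) and that no carry-in from earlier bucket jobs inflates \(\mathcal{B}_n^\circ\). I would handle it by induction on \(k\), using the deadline guarantee to rule out backlog. I would also argue separately that interference released before \(k\delta\) but still pending can be absorbed into the ceiling terms, by extending the window back to the start of the level-\(w\) busy period.
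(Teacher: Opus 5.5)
Your proposal is correct and takes essentially the same route as the paper's proof: the registry-order traversal gives the deterministic same-period workloads $\mathcal{B}_n^{+}$ and $\mathcal{B}_n^{-}$, buses and shorter-period buckets add the ceiling interference terms, and the standard fixed-priority response-time argument gives the least fixed point as the bound. You go further than the paper by making its implicit assumptions explicit: you rule out own-bucket backlog using Theorem~\ref{thm:schedulability_bound}, and you absorb higher-priority carry-in by extending the window back to the start of the busy period. That busy period should be taken over strictly higher-priority work, so that bucket $w$'s previous job is not counted.
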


\proof 
See Appendix~\ref{appendix: response_time_bounds}. 
\endproof

Lemma~\ref{lem:response_time_bounds} gives release-independent completion bounds for \SymbUpdate and \SymbHandle phases. The registry order is already used in this local analysis to tighten the deterministic same-period workload. Specifically, \(\mathcal B_n^+\) only includes the \SymbUpdate prefix before \(d_n\) in the forward traversal, while \(\mathcal B_n^-\) includes all same-period \SymbUpdate phases and only the reverse-\SymbHandle prefix before \(d_n\). Thus, compared with charging all same-period phases uniformly, the ordering information gives tighter bounds \(\widehat{\mathcal R}_n^+\) and \(\widehat{\mathcal R}_n^-\).

The subsequent latency analysis can therefore ignore the exact release-dependent interference and use \(\widehat{\mathcal R}_n^+\) and \(\widehat{\mathcal R}_n^-\) as safe local completion bounds.

We next propagate these bounds through the data-transfer relation along a leaf-to-root path to derive an upper bound on the intra-tree decision latency.


\subsection{Intra-tree Latency Bound}

For clarity, we use \emph{producer} and \emph{consumer} to refer to the data-transfer direction of the current phase.
During \SymbUpdate propagation, the child device is the producer and the parent device is the consumer.
During \SymbHandle propagation, this relation is reversed.

For two adjacent devices on a path, the consumer can observe the producer output only at its own release instants.
At a common release instant, if the proposed scheduling order ensures that the producer output is already available when the consumer reads its input, then the consumer may use the producer instance released at the same tick.
Otherwise, we conservatively require the consumed producer instance to have completed before the consumer release, using the response-time bound in \textbf{Lemma~\ref{lem:response_time_bounds}}.

For the \(i\)-th path \((n_p)_{p=1}^{P_i}\), we define two predicates to distinguish these cases.
For upward \SymbUpdate propagation, let \(\mathcal{U}_{i,u}\), \(u=2,\ldots,P_i\), indicate whether the output of \(\tau_{n_{(u-1)}}^{+}\) is visible to \(\tau_{n_u}^{+}\) at a common release instant:
\[
\begin{aligned}
    \mathcal{U}_{i,u}
    \triangleq {}&
    \left[
        \mathcal{T}_{n_{(u-1)}}
        <
        \mathcal{T}_{n_u}
    \right.
    \\
    &\left.
    {}\lor
        \left(
            \mathcal{T}_{n_{(u-1)}}
            =
            \mathcal{T}_{n_u}
            \land
            \mathcal{I}(d_{n_{(u-1)}})
            <
            \mathcal{I}(d_{n_u})
        \right)
    \right].
\end{aligned}
\]

For downward \SymbHandle propagation, let \(\mathcal{H}_{i,v}\), \(v=1,\ldots,P_i-1\), indicate whether the output of \(\tau_{n_{(P_i-v+1)}}^{-}\) is visible to \(\tau_{n_{(P_i-v)}}^{-}\) at a common release instant:
\[
\begin{aligned}
    \mathcal{H}_{i,v}
    \triangleq {}&
    \left[
        \mathcal{T}_{n_{(P_i-v+1)}}
        <
        \mathcal{T}_{n_{(P_i-v)}}
    \right.
    \\
    &\left.
    {}\lor
        \left(
            \mathcal{T}_{n_{(P_i-v+1)}}
            =
            \mathcal{T}_{n_{(P_i-v)}}
        \right.
    \right.
    \\
    &\left.
    \left.
            {}\land
            \mathcal{I}(d_{n_{(P_i-v+1)}})
            >
            \mathcal{I}(d_{n_{(P_i-v)}})
        \right)
    \right].
\end{aligned}
\]

When the corresponding predicate holds, the consumer release is aligned with the producer release.
Otherwise, it is aligned with the producer completion bound.
The following theorem constructs a release sequence that safely upper-bounds the sequence selected by the data-transfer relation.

\begin{theorem}[Intra-tree Latency Bound]
\label{thm:tree_latency_bound}
    Consider the \(i\)-th leaf-to-root path \((n_p)_{p=1}^{P_i}\in\mathbb{P}_m\) and a final leaf \SymbHandle release index
    \(k_{(2P_i-1)}\in\mathbb{K}_{n_{(2P_i-1)}}\).
    Let \((k_j)_{j=1}^{2P_i-1}\) be recursively defined backward as follows.
    
    For downward \SymbHandle propagation, for \(v=P_i-1,\ldots,1\),
    \[
        k_{(P_i+v-1)}
        =
        \begin{cases}
            \displaystyle
            \left\lfloor
            \frac{k_{(P_i+v)}\delta}{\mathcal{T}_{n_{(P_i-v+1)}}}
            \right\rfloor
            \frac{\mathcal{T}_{n_{(P_i-v+1)}}}{\delta},
            &
            \mathcal{H}_{i,v},
            \\[16pt]
            \displaystyle
            \left\lfloor
            \frac{
                \begin{aligned}
                    &k_{(P_i+v)}\delta\\
                    &{}-\widehat{\mathcal{R}}_{n_{(P_i-v+1)}}^{-}
                \end{aligned}
            }
            {\mathcal{T}_{n_{(P_i-v+1)}}}
            \right\rfloor
            \frac{\mathcal{T}_{n_{(P_i-v+1)}}}{\delta},
            &
            \text{otherwise}.
        \end{cases}
    \]
    
    For upward \SymbUpdate propagation, for \(u=P_i,\ldots,2\),
    \[
        k_{(u-1)}
        =
        \begin{cases}
            \displaystyle
            \left\lfloor
            \frac{k_u\delta}{\mathcal{T}_{n_{(u-1)}}}
            \right\rfloor
            \frac{\mathcal{T}_{n_{(u-1)}}}{\delta},
            &
            \mathcal{U}_{i,u},
            \\[16pt]
            \displaystyle
            \left\lfloor
            \frac{k_u\delta - \widehat{\mathcal{R}}_{n_{(u-1)}}^{+}}
            {\mathcal{T}_{n_{(u-1)}}}
            \right\rfloor
            \frac{\mathcal{T}_{n_{(u-1)}}}{\delta},
            &
            \text{otherwise}.
        \end{cases}
    \]
    
    Then the intra-tree latency of the \(m\)-th tree is upper-bounded by
    \[
        L_m
        \le
        \max_{(n_p)_{p=1}^{P_i}\in\mathbb{P}_m}
        \max_{k_{(2P_i-1)}\in\mathbb{K}_{n_1}}
        \left(
            k_{(2P_i-1)}\delta
            +
            \widehat{\mathcal{R}}_{n_1}^{-}
            -
            k_1\delta
        \right),
    \]
    where \(k_1\) is the first release index obtained by the backward recurrence for the corresponding path and final release.
\end{theorem}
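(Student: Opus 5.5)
The proof goes by backward induction along the path. Let \((k^\ast_j)\) be the actual release indices selected by the data transfer relation \(\prec\), and \((k_j)\) those given by the recurrence in Theorem~\ref{thm:tree_latency_bound}. Both start from the same terminal index \(k^\ast_{(2P_i-1)}=k_{(2P_i-1)}\). The central claim is monotone domination:
\[
    k^\ast_j \ge k_j, \qquad j=1,\ldots,2P_i-1 .
\]
Given this, the latency bound follows quickly. Lemma~\ref{lem:response_time_bounds} gives \(F(\tau_{n_1}^{-},k_{(2P_i-1)})\le k_{(2P_i-1)}\delta+\widehat{\mathcal{R}}^-_{n_1}\). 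A stage cannot start before its release, so \(S(\tau_{n_1}^{+},k^\ast_1)\ge k^\ast_1\delta\ge k_1\delta\). Subtracting and maximizing over paths in \(\mathbb{P}_m\) and over terminal indices yields the bound on \(L_m\) from Definition~\ref{def:intra_tree_latency}.

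For the inductive step, take a consumer instance \((\tau_c^\circ,k'_c)\) with \(k'_c\ge k_c\) and producer \(d_p\). I will show that the recurrence value
\[
    \hat k_p=\lfloor (k_c\delta-\epsilon)/\mathcal{T}_p\rfloor\,\mathcal{T}_p/\delta,
    \qquad \epsilon\in\{0,\widehat{\mathcal{R}}^\circ_p\},
\]
is at most the actual maximizer \(k^\ast_p=\max\{k\in\mathbb{K}_p : F(\tau_p^\circ,k)\le S(\tau_c^\circ,k'_c)\}\). It suffices that the candidate \(k\) with \(k\delta\) equal to the largest multiple of \(\mathcal{T}_p\) not exceeding \(k'_c\delta-\epsilon\) belongs to that set; this candidate is \(\ge\hat k_p\) by monotonicity of the floor. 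The two cases are handled separately.

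\emph{Otherwise case} (\(\epsilon=\widehat{\mathcal{R}}^\circ_p\)). Lemma~\ref{lem:response_time_bounds} gives \(F(\tau_p^\circ,k)\le k\delta+\widehat{\mathcal{R}}^\circ_p\le k'_c\delta\le S(\tau_c^\circ,k'_c)\), so the candidate lies in the set.

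\emph{Predicate case} (\(\mathcal{U}_{i,u}\) or \(\mathcal{H}_{i,v}\) holds, \(\epsilon=0\)). Here the producer released at \(k\delta\le k'_c\delta\) must finish before the consumer instance starts. I split this into two sub-cases.
\begin{enumerate}
    \item \emph{Strictly shorter producer period.} The producer's bucket has strictly higher RMS priority. Its release at \(k\delta\) precedes or coincides with \(k'_c\delta\). A lower-priority bucket stage cannot start while a higher-priority job is pending, so the producer's job finishes before \(S(\tau_c^\circ,k'_c)\). One must also check that the higher-priority job released at \(k\delta\) has not been postponed past \(k'_c\delta\). This holds because under schedulability (Theorem~\ref{thm:schedulability_bound}) it finishes within its period and stays pending until done.
    \item \emph{Equal periods.} Both devices are in the same bucket \(\mathbf{Q}_w\) and \(k=k'_c\). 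The order in \eqref{eq: scheduling order}, together with the \(\mathcal{I}\)-comparison in the predicate (child before parent for \SymbUpdate, parent before child for \SymbHandle), puts the producer stage earlier in the same bucket instance. Theorem~\ref{thm: initialization-order-preservation} guarantees the predicate's \(\mathcal{I}\)-ordering matches the edge direction.
\end{enumerate}

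The step I expect to be the main obstacle is the shorter-period sub-case. The issue is subtle when the consumer's bucket job for \(k'_c\) is itself delayed, or when the producer's previous job overlaps. I would argue via RMS priority, since any pending higher-priority work preempts, plus schedulability, which rules out backlog carry-over across periods. For the \SymbHandle direction the same argument applies with the roles of parent and child swapped.

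A final bookkeeping point concerns the junction between the two phases. The recurrence continues from \(k_{P_i}\) for the root's \SymbHandle to the root's \SymbUpdate with the same index. This is justified by the precedence constraint \eqref{subeq:precedence}: the root's Handle at index \(k_{P_i}\) reads its own Update at the same index.
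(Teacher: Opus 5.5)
Your proposal is correct and follows the paper's proof essentially step for step. Both arguments use backward induction to show that the actual indices dominate the recurrence indices, handle the non-predicate case via \(F\le q\delta+\widehat{\mathcal{R}}\le\kappa\delta\le S\), and close with \(F\le k_{(2P_i-1)}\delta+\widehat{\mathcal{R}}^{-}_{n_1}\) and \(S\ge\kappa_1\delta\ge k_1\delta\). The one difference is that you justify the predicate case (RMS preemption when the producer period is strictly shorter, fixed in-bucket order when the periods are equal), whereas the paper only asserts it as a scheduler guarantee.
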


\proof
Inspired by the latency-analysis methodology for data chains of real-time periodic tasks~\cite{kloda2018latency}, we prove Theorem~\ref{thm:tree_latency_bound} in Appendix~\ref{appendix: tree_latency_bound}.
The key insight is that the predicates \(\mathcal{U}_{i,u}\) and \(\mathcal{H}_{i,v}\) identify when adjacent producer--consumer phases have same-tick visibility; otherwise, the proof falls back to the response-time bounds \(\widehat{\mathcal{R}}_n^{+}\) and \(\widehat{\mathcal{R}}_n^{-}\) to conservatively account for release-level waiting.
\endproof


\subsection{Latency under Special Cases}

Since no period relation is imposed on devices within a tree, the general form of Theorem~\ref{thm:tree_latency_bound} is necessarily complex and provides limited intuition.
We therefore derive a simpler bound for a common case in robotic control firmware.

\subsubsection{Harmonic Device Periods}

We first consider tree-structured harmonic periods.
For every dependency edge \((d_{n_1},d_{n_2})\in\mathbb{E}\), where \(d_{n_1}\) is the child and \(d_{n_2}\) is the parent, assume that
\[
    \mathcal{T}_{n_2}
    =
    \lambda_{n_1,n_2}\mathcal{T}_{n_1},
    \qquad
    \lambda_{n_1,n_2}\in\mathbb{N}^{+}.
\]
Thus, the parent period is an integer multiple of the child period.

Under this assumption, upward \SymbUpdate propagation always has same-tick visibility.
If the child has a shorter period, it has higher RMS priority and completes before the parent activation.
If the two devices have the same period, the forward registry order executes the child \SymbUpdate before the parent \SymbUpdate.
Therefore, the corresponding predicate \(\mathcal{U}_{i,u}\) always holds.

For downward \SymbHandle propagation, same-tick visibility holds only when the parent and child have the same period.
If the parent period is larger, the child has higher RMS priority, and the parent \SymbHandle is not guaranteed to complete before the child \SymbHandle at the same tick.
In this case, the consumed parent instance must be selected using the parent \SymbHandle completion bound.

\begin{corollary}[Harmonic Intra-Tree Latency Bound]
\label{cor:harmonic_path_latency_bound}
    Under the harmonic period assumption above, the backward release sequence in Theorem~\ref{thm:tree_latency_bound} can be simplified as follows:
    
    For downward \SymbHandle propagation, for \(v=P_i-1,\ldots,1\),
    \[
        k_{(P_i+v-1)}
        =
        \begin{cases}
            k_{(P_i+v)},\\
            \qquad \qquad \qquad
            \mathcal{T}_{n_{(P_i-v+1)}}
            =
            \mathcal{T}_{n_{(P_i-v)}};
            \\[10pt]
            \displaystyle
            \left\lfloor
            \frac{
                k_{(P_i+v)}\delta
                -
                \widehat{\mathcal{R}}_{n_{(P_i-v+1)}}^{-}
            }
            {\mathcal{T}_{n_{(P_i-v+1)}}}
            \right\rfloor
            \frac{\mathcal{T}_{n_{(P_i-v+1)}}}{\delta},\\
            \qquad \qquad \qquad
            \mathcal{T}_{n_{(P_i-v+1)}}
            >
            \mathcal{T}_{n_{(P_i-v)}} .
        \end{cases}
    \]
    
    For upward \SymbUpdate propagation, for \(u=P_i,\ldots,2\),
    \[
        k_{(u-1)}=k_u .
    \]
    
    Consequently, the intra-tree latency of the \(m\)-th tree satisfies
    \[
        L_m
        \le
        \max_{(n_p)_{p=1}^{P_i}\in\mathbb P_m}
        \max_{k_{(2P_i-1)}\in\mathbb K_{n_1}}
        \left(
            k_{(2P_i-1)}\delta
            +
            \widehat{\mathcal R}_{n_1}^{-}
            -
            k_1\delta
        \right),
    \]
\end{corollary}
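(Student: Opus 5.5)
The corollary is a direct specialization of Theorem~\ref{thm:tree_latency_bound}. The plan is to determine, edge by edge, which branch of each recurrence is active under the harmonic assumption, and then simplify the floor expressions using divisibility. The final latency inequality is literally the one in Theorem~\ref{thm:tree_latency_bound}, evaluated on the simplified sequence. So once the recurrences are shown to coincide, nothing further is needed.

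\textbf{Upward stage.} Take $u\in\{2,\ldots,P_i\}$. The child is $d_{n_{(u-1)}}$ and the parent is $d_{n_u}$, so harmonicity gives $\mathcal{T}_{n_u}=\lambda\mathcal{T}_{n_{(u-1)}}$ with $\lambda\ge 1$.

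If $\lambda>1$, the first disjunct of $\mathcal{U}_{i,u}$ holds. If $\lambda=1$, the edge $(d_{n_{(u-1)}},d_{n_u})\in\mathbb{E}$ together with part~2 of Theorem~\ref{thm: initialization-order-preservation} gives $\mathcal{I}(d_{n_{(u-1)}})<\mathcal{I}(d_{n_u})$, so the second disjunct holds. Hence $\mathcal{U}_{i,u}$ is always true and the first branch of the recurrence applies.

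Since $k_u\in\mathbb{K}_{n_u}$, the quantity $k_u\delta$ is a multiple of $\mathcal{T}_{n_u}$, and therefore of $\mathcal{T}_{n_{(u-1)}}$. The floor is then exact, so $k_{(u-1)}=k_u$.

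One bookkeeping point must be checked: $k_{P_i}$, produced by the downward recurrence, lies in $\mathbb{K}_{n_{P_i}}$. This holds because every branch of the recurrence outputs a multiple of $\mathcal{T}_{n_{(P_i-v+1)}}/\delta$. I would state this membership invariant explicitly and prove it by induction along the backward recurrence.

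\textbf{Downward stage.} Take $v\in\{1,\ldots,P_i-1\}$. The producer is the parent $d_{n_{(P_i-v+1)}}$ and the consumer is the child $d_{n_{(P_i-v)}}$, so $\mathcal{T}_{n_{(P_i-v+1)}}\ge\mathcal{T}_{n_{(P_i-v)}}$.

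The first disjunct of $\mathcal{H}_{i,v}$ would require the parent period to be strictly smaller, which is impossible here. So $\mathcal{H}_{i,v}$ reduces to equal periods together with $\mathcal{I}(\text{parent})>\mathcal{I}(\text{child})$. That ordering condition is again guaranteed by Theorem~\ref{thm: initialization-order-preservation}.

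In the equal-period case, $\mathcal{H}_{i,v}$ holds. Because $k_{(P_i+v)}\in\mathbb{K}_{n_{(P_i-v)}}$ is a multiple of the common period over $\delta$, the floor is exact and $k_{(P_i+v-1)}=k_{(P_i+v)}$.

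In the strictly-larger case, $\mathcal{H}_{i,v}$ fails, and the second branch is copied verbatim. These two cases give exactly the stated piecewise recurrence.

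\textbf{Main obstacle.} The only delicate points are the index-membership bookkeeping and correctly matching the indexing convention $n_{(P_i+v)}=n_{(P_i-v)}$. Once those are handled, the conclusion follows by substituting the simplified recurrences into the bound of Theorem~\ref{thm:tree_latency_bound}.
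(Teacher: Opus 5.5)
Your proposal is correct and takes essentially the same route as the paper. Like the paper, you check edge by edge that $\mathcal{U}_{i,u}$ always holds and that $\mathcal{H}_{i,v}$ holds exactly when parent and child periods coincide, collapse the floors, and substitute into Theorem~\ref{thm:tree_latency_bound}. You are in fact more explicit than the paper, which supports the ordering disjuncts only by appeal to the registry traversal and the floor collapse only by ``releases are aligned''. One small correction: the invariant you need is that each $k_j$ is an integer multiple of the relevant period over $\delta$, not that $k_j\in\mathbb{K}$, because the completion-bound branch can return a negative index near $t=0$, and your divisibility argument works for negative multiples anyway.
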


\begin{proof}
    Consider any path \((n_p)_{p=1}^{P_i}\in\mathbb P_m\). For upward \SymbUpdate propagation, the harmonic assumption gives
    \[
        \mathcal T_{n_{(u-1)}}\le \mathcal T_{n_u},
        \qquad
        u=2,\ldots,P_i .
    \]
    If the inequality is strict, the child has higher RMS priority; if the periods are equal, the forward registry traversal executes the child before the parent. Hence \(\mathcal U_{i,u}\) holds for every upward edge. Since releases are aligned, the upward recurrence in Theorem~\ref{thm:tree_latency_bound} reduces to
    \[
        k_{(u-1)}=k_u .
    \]

    For downward \SymbHandle propagation, the producer is the parent and the consumer is the child, so the harmonic assumption gives
    \[
        \mathcal T_{n_{(P_i-v+1)}}\ge \mathcal T_{n_{(P_i-v)}} .
    \]
    If the periods are equal, reverse registry traversal executes the parent before the child, so \(\mathcal H_{i,v}\) holds and the recurrence gives
    \[
        k_{(P_i+v-1)}=k_{(P_i+v)} .
    \]
    If the parent period is larger, same-tick visibility is not guaranteed, and Theorem~\ref{thm:tree_latency_bound} uses the completion-bound branch, giving the stated second case.

    Therefore, the recurrence in Theorem~\ref{thm:tree_latency_bound} simplifies exactly as stated. Applying the tree-level bound of Theorem~\ref{thm:tree_latency_bound} with this simplified recurrence proves the corollary.
\end{proof}

\subsubsection{Same-Period Device Trees}

The harmonic bound still permits period-level deferral when adjacent devices have different periods. We now consider the special case where all devices have the same period and aligned releases. In this case, every adjacent producer-consumer pair on any leaf-to-root path is released at the same tick. Since the forward traversal follows the child-to-parent direction for \SymbUpdate and the reverse traversal follows the parent-to-child direction for \SymbHandle, intra-tree propagation introduces no release-level waiting.

\begin{corollary}[Same-Period Intra-Tree Latency]
    \label{cor:same_period_intra_tree_latency}
    Suppose that all devices in \(\mathcal D\) have the same period \(\mathcal T\) and aligned releases. Then, for every tree \((\mathcal D_m,\mathcal E_m)\), the intra-tree latency satisfies
    \[
        L_m
        \le
        \max_{(n_p)_{p=1}^{P_i}\in\mathbb P_m}
        \widehat{\mathcal R}_{n_1}^{-}.
    \]
    Moreover, since there is no shorter-period device group in this case, \(\widehat{\mathcal R}_{n_1}^{-}\) is given by the least fixed point
    \[
        \widehat{\mathcal R}_{n_1}^{-}
        =
        \mathcal B_{n_1}^{-}
        +
        \sum_{\beta=1}^{N_{\mathcal B}}
        \left\lceil
            \frac{\widehat{\mathcal R}_{n_1}^{-}}
            {\mathcal P_\beta^{\mathcal B}}
        \right\rceil
        \mathcal C_\beta^{\mathcal B},
    \]
    where
    \[
        \mathcal B_{n_1}^{-}
        =
        \sum_{d_u\in\mathcal D}
        \mathcal C_u^+
        +
        \sum_{\substack{d_u\in\mathcal D\\
                \mathcal I(d_u)>\mathcal I(d_{n_1})}}
        \mathcal C_u^-
        +
        \mathcal C_{n_1}^- .
    \]
\end{corollary}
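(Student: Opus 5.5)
The plan is to obtain this corollary as the degenerate instance of Corollary~\ref{cor:harmonic_path_latency_bound}, where every ratio \(\lambda_{n_1,n_2}\) equals \(1\), and then to simplify the fixed-point equation of Lemma~\ref{lem:response_time_bounds}.

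First, I would check that a common period trivially satisfies the harmonic assumption. On every edge of every leaf-to-root path, the producer and consumer periods are equal.

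\begin{itemize}
\item \textbf{Upward edges.} \(\mathcal U_{i,u}\) holds via its equal-period disjunct. Theorem~\ref{thm: initialization-order-preservation} gives \(\mathcal I(d_{n_{(u-1)}})<\mathcal I(d_{n_u})\) for every edge \((d_{n_{(u-1)}},d_{n_u})\in\mathbb E\).
\item \textbf{Downward edges.} \(\mathcal H_{i,v}\) holds for the same reason: the parent has the larger rank, and the \SymbHandle{} stages are executed in reverse order in~\eqref{eq: scheduling order}.
\end{itemize}

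Hence only the first branch of each recurrence is active. Because \(k\delta\) is a multiple of \(\mathcal T\) for every \(k\in\mathbb K_n\) (aligned releases, a common period), the floor expressions are exact. Therefore \(k_{(P_i+v-1)}=k_{(P_i+v)}\) and \(k_{(u-1)}=k_u\), so every index collapses to \(k_1=k_{(2P_i-1)}\).

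Next, I would substitute into the bound of Theorem~\ref{thm:tree_latency_bound}. The term \(k_{(2P_i-1)}\delta+\widehat{\mathcal R}_{n_1}^- - k_1\delta\) reduces to \(\widehat{\mathcal R}_{n_1}^-\), which is independent of the release index. The inner maximum over \(k_{(2P_i-1)}\) therefore disappears, leaving the maximum over paths, as stated.

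For the second claim, I would specialize Lemma~\ref{lem:response_time_bounds}.

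\begin{itemize}
\item Since \(W=1\), the index set \(\{w:\mathcal T^{(w)}<\mathcal T_{n_1}\}\) is empty, so the bucket-interference sum vanishes. Only the bus terms and the blocking term remain.
\item The condition \(\mathcal T_u=\mathcal T_{n_1}\) holds for every \(d_u\in\mathbb D\). Hence \(\mathcal B_{n_1}^-\) becomes the sum of all \(\mathcal C_u^+\), plus the \(\mathcal C_u^-\) with \(\mathcal I(d_u)>\mathcal I(d_{n_1})\), plus \(\mathcal C_{n_1}^-\), which is exactly the displayed expression. The corollary's \(\mathcal D\) is to be read as \(\mathbb D\).
\end{itemize}

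The main obstacle is the rigor of the same-tick visibility argument, since the predicates only assert that the scheduling order places the producer first. With a single bucket, however, the entire sequence~\eqref{eq: scheduling order} for release \(k\) runs as one job, preempted only by bus routines that perform no control logic. Assuming deadlines are met (Theorem~\ref{thm:schedulability_bound}), the job for release \(k\) finishes before release \(k+1\). So producer instance \(k\) finishes before consumer instance \(k\) starts, and no later producer instance exists yet. This makes \(k\) the maximizer in the data-transfer relation, so the true indices coincide with the recurrence rather than merely bounding it.

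The remaining step is the interpretation of \(S(\tau_{n_1}^+,k_1)\ge k_1\delta\). This holds because a stage cannot start before its release, so \(F-S\le F-k_1\delta\le\widehat{\mathcal R}_{n_1}^-\), which closes the argument.
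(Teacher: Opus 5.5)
Your proposal is correct and follows essentially the same route as the paper: with a single period, every $\mathcal U_{i,u}$ and $\mathcal H_{i,v}$ holds through its equal-period disjunct and the registry order. The backward recurrence of Theorem~\ref{thm:tree_latency_bound} then collapses to $k_1=\cdots=k_{(2P_i-1)}$, and the shorter-period interference term of Lemma~\ref{lem:response_time_bounds} vanishes. Your extra paragraph showing that the true indices coincide with the recurrence is unnecessary, since the bound only needs the true indices to dominate the recurrence. Unlike the rest of your argument, it also relies on deadlines being met, which the corollary does not assume.
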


\begin{proof}
    Consider any path \((n_p)_{p=1}^{P_i}\in\mathbb P_m\). Since all devices have the same period \(\mathcal T\) and aligned releases, every adjacent producer-consumer pair on the path is released at the same tick. The forward traversal executes each child \SymbUpdate before its parent \SymbUpdate, and the reverse traversal executes each parent \SymbHandle before its child \SymbHandle. Hence all predicates \(\mathcal U_{i,u}\) and \(\mathcal H_{i,v}\) hold. Therefore, Theorem~\ref{thm:tree_latency_bound} gives
    \[
        k_1=k_2=\cdots=k_{(2P_i-1)} .
    \]
    Substituting this equality into Theorem~\ref{thm:tree_latency_bound} yields
    \[
        L_m
        \le
        \max_{(n_p)_{p=1}^{P_i}\in\mathbb P_m}
        \widehat{\mathcal R}_{n_1}^{-}.
    \]

    It remains to simplify \(\widehat{\mathcal R}_{n_1}^{-}\). Since all devices have the same period, the shorter-period device-group term in Lemma~\ref{lem:response_time_bounds} is empty. Thus,
    \[
        \widehat{\mathcal R}_{n_1}^{-}
        =
        \mathcal B_{n_1}^{-}
        +
        \sum_{\beta=1}^{N_{\mathcal B}}
        \left\lceil
        \frac{\widehat{\mathcal R}_{n_1}^{-}}
        {\mathcal P_\beta^{\mathcal B}}
        \right\rceil
        \mathcal C_\beta^{\mathcal B}.
    \]
    Since all devices belong to the same period group, \(\mathcal B_{n_1}^{-}\) takes the stated form. This proves the corollary.
\end{proof}

\begin{remark}
The proofs of Corollary~\ref{cor:harmonic_path_latency_bound} and Corollary~\ref{cor:same_period_intra_tree_latency} show that Theorem~\ref{thm: initialization-order-preservation}, enabled by the framework design, reduces latency at multiple propagation steps.
Since harmonic and same-period device structures are common in robotic systems, the proposed scheduling policy is effective in reducing intra-tree decision latency in practical scenarios.
\end{remark}

%% file: Contents/Experiment.tex
\section{Experiments}

\subsection{Setup}

\subsubsection{Scenario}

Evaluating a control firmware framework is difficult because a fair comparison requires a real project with an existing implementation and development history.
Moreover, changing the baseline to match all functions of the proposed framework may itself affect fairness.
Therefore, we use a case study: we adapt \FineMote to an existing open-source robotic platform and compare firmware-level timing metrics.

We choose the underwater robot \ROV as the evaluation platform.
Its mechanical and electrical design is open-source~\cite{xu2025aucamp}, and its control firmware is also available~\cite{ROV_control}.
The platform uses the \textsc{RoboMaster Type C} board, which is supported by \FineMote.

\ROV has eight thrusters and performs closed-loop attitude and motion control using an IMU and four pressure sensors.
The IMU is connected through UART, while the pressure sensors are connected through an I\textsuperscript{2}C expansion board.
Due to limited peripheral interfaces, the thruster ESCs are driven by an I\textsuperscript{2}C-to-8-channel PWM module.
From the perspective of \FineMote, the system contains 14 device objects: 13 leaf devices and one top-level \ROV object.
These devices form the two-level tree shown in Fig.~\ref{fig:rov_tree}.
The IMU period is \(\SI{5}{\milli\second}\), while all other devices have a period of \(\SI{10}{\milli\second}\).

\begin{figure}[ht]
    \centering
    \includegraphics[width=0.9\linewidth]{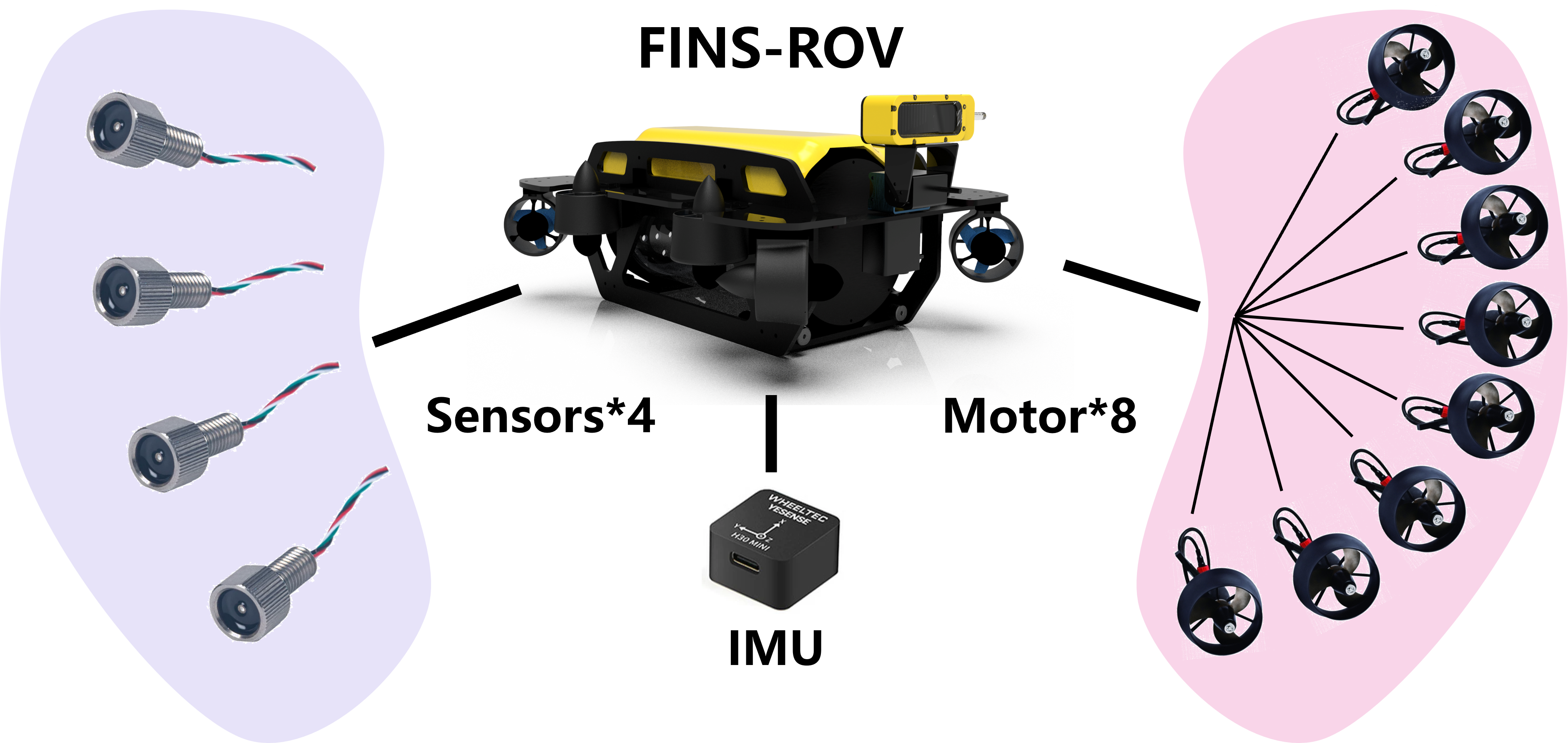}
    \caption{Device tree structure of the \ROV case study.}
    \label{fig:rov_tree}
\end{figure}

\subsubsection{Measurement Method}

To reduce measurement interference, we use a debugger and RTT Viewer to collect timestamps.
RTT transfers data through the debug channel, avoiding extra bus communication in the control path.

We insert lightweight probe functions at the beginning of \SymbUpdate and the end of \SymbHandle.
Based on the sequential execution semantics of C++ statements~\cite{cpp17}, the recorded timestamps preserve the order of measured events.
In our setup, the method provides approximately \(\SI{10}{\micro\second}\)-level precision, which is sufficient for the timing analysis in this experiment.

\subsection{Results}

The baseline firmware uses interrupt-set flags and polling in the main loop, making its latency difficult to define using our formal metric.
We therefore evaluate the firmware using two practical timing metrics.

The first metric is the execution interval between the farthest sensor event and motor output event, which reflects the internal response delay of the firmware.
As shown in Fig.~\ref{fig:latency}, \FineMote achieves an average interval of \(\SI{53.64}{\micro\second}\) with a standard deviation of \(\SI{4.29}{\micro\second}\), while the baseline reaches \(\SI{1610.00}{\micro\second}\) with a standard deviation of \(\SI{10.68}{\micro\second}\).

\begin{figure}[t]
    \centering
    \includegraphics[width=0.9\linewidth]{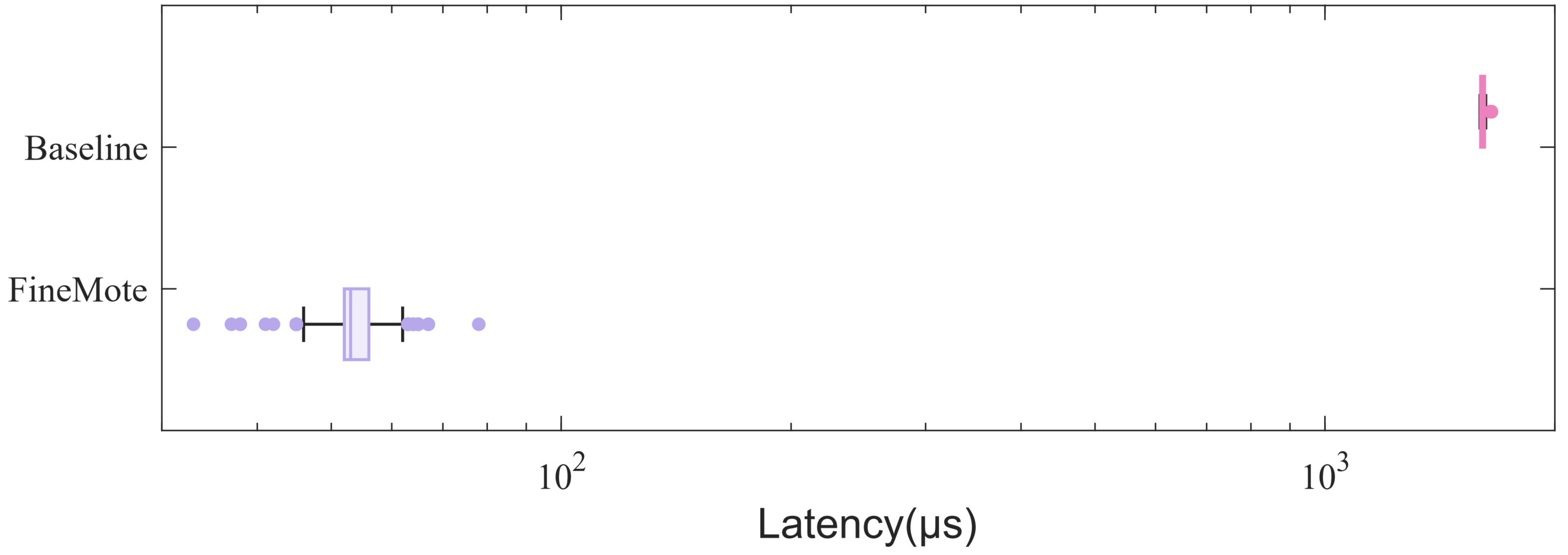}
    \caption{Execution interval between the farthest sensor event and motor output event.}
    \label{fig:latency}
\end{figure}

The second metric is actuator jitter, measured as the standard deviation of the execution interval of the last initialized motor's \SymbHandle.
As shown in Fig.~\ref{fig:jitter}, \FineMote reduces this jitter to \(\SI{5.34}{\micro\second}\), compared with \(\SI{18.87}{\micro\second}\) in the baseline.

\begin{figure}[t]
    \centering
    \includegraphics[width=0.9\linewidth]{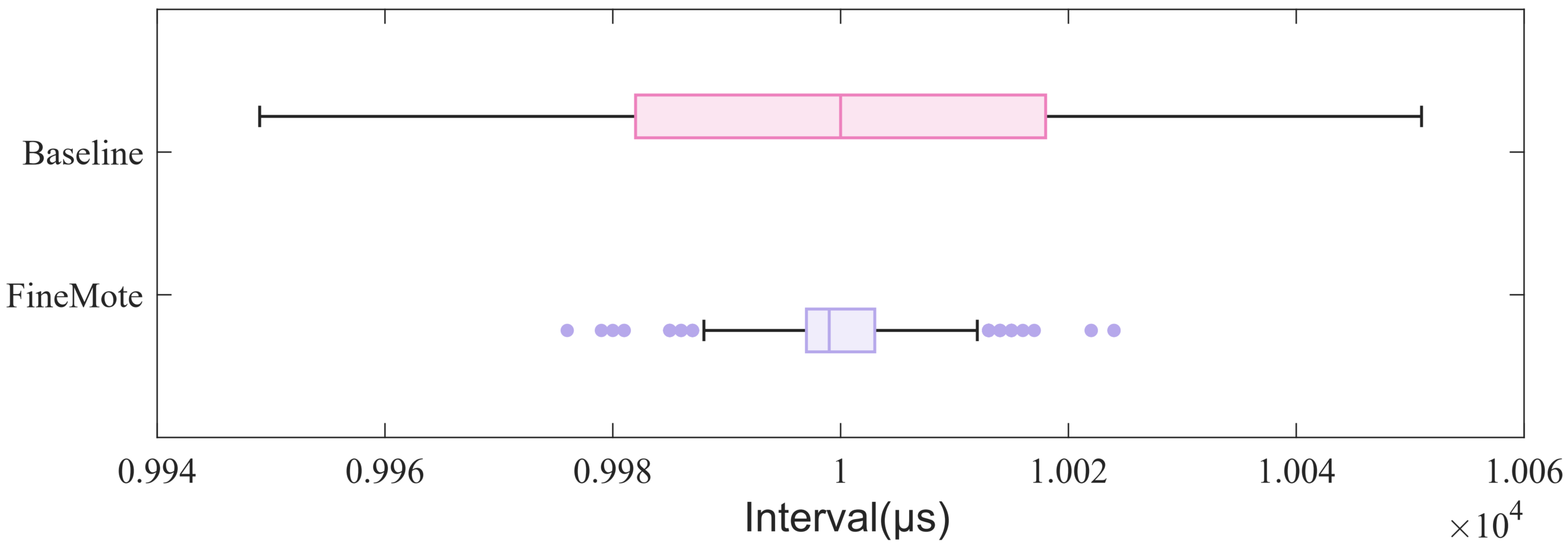}
    \caption{Actuator jitter measured from the last initialized motor.}
    \label{fig:jitter}
\end{figure}

These results show that the proposed framework and scheduling policy provide lower firmware-level response delay and more stable actuator timing in the \ROV case study.

%% file: Contents/Related_Works.tex
\section{Related Works \label{sec: related works}}

Structure-driven generation has been widely explored in robotics and automotive software.
In robotics, ros2\_control~\cite{ros2_control_docs} uses structural information in URDF to describe hardware interfaces and bind them with device-specific implementations.
In the automotive domain, AUTOSAR~\cite{AUTOSAR} adopts an XML-based paradigm to configure and generate control software.
These approaches reduce the engineering effort of firmware or middleware development.
However, they mainly focus on functional integration and code generation, while timing orchestration and scheduling guarantees are not treated as first-class design objectives.

Several studies have addressed real-time scheduling for robotic middleware.
ROS-lite~\cite{azumi2020ros,tajima2024ros} provides a lightweight ROS-compatible framework for embedded many-core platforms.
ROSCH~\cite{saito2018rosch} improves real-time execution in ROS through synchronization support and fixed-priority DAG scheduling.
These systems demonstrate the importance of real-time scheduling in robotic software stacks.
However, they do not target structure-driven firmware generation, nor do they exploit tree-structured device dependencies induced by robot descriptions.

In contrast, this work jointly considers firmware generation and timing orchestration.
The proposed framework exposes schedulable units from tree-structured device objects, statically derives execution order from the construction and registration mechanism, and provides schedulability and latency analysis for the resulting firmware.
Thus, our approach complements existing generation frameworks by making timing behavior an explicit part of the generated control firmware.

%% file: Contents/Conclusion.tex
\section{Conclusion}

This paper presented a control firmware generation framework with static timing orchestration for tree-structured robotic device models.
The proposed framework objectifies heterogeneous low-level device logic, decouples event-triggered communication from periodic control execution, and exposes device-level scheduling units for timing analysis.
By exploiting dependency injection and C++ initialization behavior, the framework derives a deterministic device registration order without explicitly maintaining the complete device forest at runtime.

Based on this registration order, we proposed a static scheduling policy that groups devices by period, assigns Rate-Monotonic Scheduling priorities across period buckets, and executes device phases in an order consistent with tree-structured data propagation.
We formulated the corresponding scheduling problem, established schedulability conditions, and derived intra-tree decision-latency bounds under the proposed policy.
These results show that the framework can satisfy deadline and precedence constraints while reducing latency accumulation within device trees.

We implemented the proposed design and evaluated it on robotic control platforms.
The results demonstrate that static timing orchestration can provide predictable timing behavior with low runtime overhead, while preserving the development benefits of structure-driven firmware generation.
Future work will focus on tighter latency analysis, broader support for heterogeneous communication patterns, and automated extraction of timing parameters from generated firmware.

%% file: Contents/Appendix.tex
\appendix

\subsection{Proof of Theorem~\ref{thm: initialization-order-preservation}}
\label{appendix: initialization-order}

The proof is closely tied to the implementation discipline of the framework.
We first establish the ordering property and then discuss the implementation requirements.

Consider a dependency edge
\[
    (d^{(q)}_{r_1},d^{(q)}_{r_2})\in\mathbb{E}
\]
within a translation unit \(\mathbb{U}_q\), where \(d^{(q)}_{r_1}\) is the child device and \(d^{(q)}_{r_2}\) is the parent device.
Under dependency injection, the parent device is constructed using the child device as a construction argument.
Therefore, by the name lookup rules in \S~6.4 of C++17~\cite{cpp17}, the child device must be declared or defined before it is used in the parent construction.
Hence,
\[
    r_1 < r_2 .
\]

Furthermore, by \S~6.6.3 of C++17~\cite{cpp17}, under the requirements of \emph{partially-ordered initialization}, this source-level order is preserved in the global initialization order.
Thus,
\[
    \mathcal{I}(d^{(q)}_{r_1})
    <
    \mathcal{I}(d^{(q)}_{r_2}) .
\]
This shows that each translation-unit sequence \(\mathbf{U}_q\) is preserved as a subsequence of the global initialization order \(\mathbf{O}\).

It remains to show that all device dependencies are covered.
By \S~6.2 of C++17~\cite{cpp17}, every device object that appears in the firmware must be defined at least once.
Therefore,
\[
    \bigcup_{q=1}^{Q}
    \{\, d \mid d \in \mathbf{U}_q \,\}
    =
    \mathbb{D}.
\]
Consequently, every dependency edge in \(\mathbb{E}\) appears in at least one translation-unit sequence whose order is preserved by partially-ordered initialization.
Combining this with the previous argument yields
\[
    (d_{n_1},d_{n_2})\in\mathbb{E}
    \;\Rightarrow\;
    \mathcal{I}(d_{n_1})
    <
    \mathcal{I}(d_{n_2}) .
\]
Therefore, the global initialization order \(\mathbf{O}\) is consistent with the child-to-parent order of the device forest.
\endproof

The proof uses \emph{partially-ordered initialization}, which can be realized in two ways:
i) All objects in the same device tree can be defined in one translation unit.
ii) If a developer intends to expose a defined device object to other translation units, the object should be declared with the \texttt{inline} specifier.

In either case, \S~6.6.3 ensures that if all primitive devices appear before the composite device in every relevant translation unit, their construction order is preserved.
This property is naturally respected under dependency injection.

\subsection{Proof of Lemma~\ref{lem:response_time_bounds}}
\label{appendix: response_time_bounds}

Consider an arbitrary release index \(k\in\mathbb{K}_n\). We first consider the \SymbUpdate phase \(\tau_n^+\). Under the proposed scheduling policy, all devices with the same period as \(d_n\) are released at the same tick and are executed according to the registry traversal order. Therefore, before \(\tau_n^+\) completes, the deterministic same-period workload consists of \(\tau_n^+\) itself and the \SymbUpdate phases of all same-period devices that appear before \(d_n\) in the registry. This gives the term \(\mathcal{B}_n^+\).

During the response interval of length \(\mathcal{R}_n^+(k)\), each bus response stream \(\beta\) contributes at most
\[
    \left\lceil
        \frac{\mathcal{R}_n^+(k)}
             {\mathcal{P}^{\mathcal{B}}_\beta}
    \right\rceil
    \mathcal{C}^{\mathcal{B}}_\beta
\]
units of interference. Similarly, each higher-priority device period group \(w\) with \(\mathcal{T}^{(w)}<\mathcal{T}_n\) contributes at most
\[
    \left\lceil
        \frac{\mathcal{R}_n^+(k)}
             {\mathcal{T}^{(w)}}
    \right\rceil
    \mathcal{C}^{\mathbf{Q}}_w .
\]
Hence
\[
    \mathcal{R}_n^+(k)
    \le
    \mathcal{B}_n^+
    +
    \sum_{\beta=1}^{N_{\mathcal{B}}}
    \left\lceil
        \frac{\mathcal{R}_n^+(k)}
             {\mathcal{P}^{\mathcal{B}}_\beta}
    \right\rceil
    \mathcal{C}^{\mathcal{B}}_\beta
    +
    \sum_{\substack{w:\mathcal{T}^{(w)}<\mathcal{T}_n}}
    \left\lceil
        \frac{\mathcal{R}_n^+(k)}
             {\mathcal{T}^{(w)}}
    \right\rceil
    \mathcal{C}^{\mathbf{Q}}_w .
\]
By the standard fixed-priority response-time argument, the least fixed point of the corresponding monotone recurrence is a safe release-independent upper bound. Therefore,
\[
    \mathcal{R}_n^+(k)\le\widehat{\mathcal{R}}_n^+ .
\]

The proof for \SymbHandle is identical except for the deterministic same-period workload. Before \(\tau_n^-\) completes, all same-period \SymbUpdate phases must have completed, and the reverse registry traversal executes the \SymbHandle phases of same-period devices appearing after \(d_n\) before executing \(\tau_n^-\). Therefore, the deterministic same-period workload is \(\mathcal{B}_n^-\). The bus-interference and higher-priority device-group interference terms are unchanged. Hence the same fixed-priority response-time argument gives
\[
    \mathcal{R}_n^-(k)\le\widehat{\mathcal{R}}_n^- .
\]

Since \(k\in\mathbb{K}_n\) was arbitrary, both bounds hold for every release of device \(d_n\).
\endproof

\subsection{Proof of Theorem~\ref{thm:tree_latency_bound}}
\label{appendix: tree_latency_bound}

Let \((\kappa_j)_{j=1}^{2P_i-1}\) be the exact release sequence selected by the backward data-transfer relation for the path ending at
\[
    \kappa_{(2P_i-1)}=k_{(2P_i-1)} .
\]
We prove by structural induction that
\[
    \kappa_j \ge k_j,
    \qquad
    j=1,\ldots,2P_i-1 .
\]
The base case holds since \(\kappa_{(2P_i-1)}=k_{(2P_i-1)}\). Assume that the claim holds for the current consumer release in a backward propagation step.

Consider a downward \SymbHandle propagation step from \(d_{n_{(P_i-v+1)}}\) to \(d_{n_{(P_i-v)}}\), where \(v=P_i-1,\ldots,1\). By the backward data-transfer relation,
\[
\begin{aligned}
    \kappa_{(P_i+v-1)}
    &=
    \max
    \Bigl\{
        q\in\mathbb K_{n_{(P_i-v+1)}}
        {}\mid
    \\
    &\quad
        F(\tau_{n_{(P_i-v+1)}}^-,q)
        \le
        S(\tau_{n_{(P_i-v)}}^-,\kappa_{(P_i+v)})
    \Bigr\}.
\end{aligned}
\]

If \(\mathcal H_{i,v}\) holds, the scheduler guarantees that the producer output is visible to the consumer for the latest producer release no later than the consumer release. Hence
\[
    \kappa_{(P_i+v-1)}
    \ge
    \left\lfloor
    \frac{\kappa_{(P_i+v)}\delta}{\mathcal T_{n_{(P_i-v+1)}}}
    \right\rfloor
    \frac{\mathcal T_{n_{(P_i-v+1)}}}{\delta}.
\]
By the induction hypothesis \(\kappa_{(P_i+v)}\ge k_{(P_i+v)}\) and the monotonicity of the floor alignment,
\[
\begin{aligned}
    &\left\lfloor
    \frac{\kappa_{(P_i+v)}\delta}
         {\mathcal T_{n_{(P_i-v+1)}}}
    \right\rfloor
    \frac{\mathcal T_{n_{(P_i-v+1)}}}{\delta}
    \\
    &\qquad\ge
    \left\lfloor
    \frac{k_{(P_i+v)}\delta}
         {\mathcal T_{n_{(P_i-v+1)}}}
    \right\rfloor
    \frac{\mathcal T_{n_{(P_i-v+1)}}}{\delta}
    =
    k_{(P_i+v-1)} .
\end{aligned}
\]

Thus \(\kappa_{(P_i+v-1)}\ge k_{(P_i+v-1)}\).

If \(\mathcal H_{i,v}\) does not hold, visibility from the latest producer release no later than the consumer release is not guaranteed. Let
\[
    q
    =
    \left\lfloor
    \frac{\kappa_{(P_i+v)}\delta-\widehat{\mathcal R}_{n_{(P_i-v+1)}}^{-}}
    {\mathcal T_{n_{(P_i-v+1)}}}
    \right\rfloor
    \frac{\mathcal T_{n_{(P_i-v+1)}}}{\delta}.
\]
By Lemma~\ref{lem:response_time_bounds},
\[
    F(\tau_{n_{(P_i-v+1)}}^-,q)
    \le
    q\delta
    +
    \widehat{\mathcal R}_{n_{(P_i-v+1)}}^{-}
    \le
    \kappa_{(P_i+v)}\delta .
\]
Since a task cannot start before its release instant,
\[
    \kappa_{(P_i+v)}\delta
    \le
    S(\tau_{n_{(P_i-v)}}^-,\kappa_{(P_i+v)}) .
\]
Therefore,
\[
    F(\tau_{n_{(P_i-v+1)}}^-,q)
    \le
    S(\tau_{n_{(P_i-v)}}^-,\kappa_{(P_i+v)}) .
\]
Hence \(q\) is an observable producer release. Since \(\kappa_{(P_i+v-1)}\) is the latest observable release selected by the backward data-transfer relation, we obtain
\[
    \kappa_{(P_i+v-1)}\ge q .
\]
By the induction hypothesis and the monotonicity of the floor alignment,
\[
    q
    \ge
    \left\lfloor
    \frac{k_{(P_i+v)}\delta-\widehat{\mathcal R}_{n_{(P_i-v+1)}}^{-}}
    {\mathcal T_{n_{(P_i-v+1)}}}
    \right\rfloor
    \frac{\mathcal T_{n_{(P_i-v+1)}}}{\delta}
    =
    k_{(P_i+v-1)} .
\]
Thus \(\kappa_{(P_i+v-1)}\ge k_{(P_i+v-1)}\).

The upward \SymbUpdate propagation is analogous. For the transfer from \(d_{n_{(u-1)}}\) to \(d_{n_u}\), where \(u=P_i,\ldots,2\), the same argument applies with \(\mathcal U_{i,u}\), \(\widehat{\mathcal R}_{n_{(u-1)}}^+\), \(\tau^+\), \(n_{(u-1)}\), and \(n_u\) replacing \(\mathcal H_{i,v}\), \(\widehat{\mathcal R}_{n_{(P_i-v+1)}}^-\), \(\tau^-\), \(n_{(P_i-v+1)}\), and \(n_{(P_i-v)}\), respectively. Thus, for every upward step,
\[
    \kappa_{(u-1)}\ge k_{(u-1)} .
\]

Therefore, by backward induction over the downward and upward propagation steps,
\[
    \kappa_1\ge k_1 .
\]

The exact decision latency of this path under the final release \(k_{(2P_i-1)}\) is
\[
    F(\tau_{n_1}^-,\kappa_{(2P_i-1)})
    -
    S(\tau_{n_1}^+,\kappa_1).
\]
Using Lemma~\ref{lem:response_time_bounds} and \(\kappa_{(2P_i-1)}=k_{(2P_i-1)}\), we have
\[
    F(\tau_{n_1}^-,\kappa_{(2P_i-1)})
    -
    S(\tau_{n_1}^+,\kappa_1)
    \le
    k_{(2P_i-1)}\delta
    +
    \widehat{\mathcal R}_{n_1}^{-}
    -
    S(\tau_{n_1}^+,\kappa_1).
\]
Since a task cannot start before its release instant and \(\kappa_1\ge k_1\),
\[
    S(\tau_{n_1}^+,\kappa_1)
    \ge
    \kappa_1\delta
    \ge
    k_1\delta .
\]
Therefore,
\[
    F(\tau_{n_1}^-,\kappa_{(2P_i-1)})
    -
    S(\tau_{n_1}^+,\kappa_1)
    \le
    k_{(2P_i-1)}\delta
    +
    \widehat{\mathcal R}_{n_1}^{-}
    -
    k_1\delta .
\]
Thus, the intra-tree latency is bounded by
\[
    L_m
    \le
    \max_{(n_p)_{p=1}^{P_i}\in\mathbb{P}_m}
    \max_{k_{(2P_i-1)}\in\mathbb{K}_{n_1}}
    \left(
        k_{(2P_i-1)}\delta
        +
        \widehat{\mathcal{R}}_{n_1}^{-}
        -
        k_1\delta
    \right),
\]
\endproof

%% file: Refs.bib
@inproceedings{azumi2020ros,
  title={ROS-lite: ROS framework for NoC-based embedded many-core platform},
  author={Azumi, Takuya and Maruyama, Yuya and Kato, Shinpei},
  booktitle={IEEE/RSJ International Conference on Intelligent Robots and Systems (IROS)},
  year={2020},
  organization={IEEE}
}

@inproceedings{tajima2024ros,
  title={ROS-lite2: Autonomous-driving Software Platform for Clustered Many-core Processor},
  author={Tajima, Yuta and Tsunoda, Shuhei and Azumi, Takuya},
  booktitle={IEEE/RSJ International Conference on Intelligent Robots and Systems (IROS)},
  year={2024},
  organization={IEEE}
}

@inproceedings{saito2018rosch,
  title={Rosch: real-time scheduling framework for ros},
  author={Saito, Yukihiro and Sato, Futoshi and Azumi, Takuya and Kato, Shinpei and Nishio, Nobuhiko},
  booktitle={International Conference on Embedded and Real-Time Computing Systems and Applications (RTCSA)},
  year={2018},
  organization={IEEE}
}

@misc{ros2_control_docs,
  author = {{Open Robotics}},
  title  = {ros2\_control: Robot Control Framework for {ROS 2}},
  year   = {2021},
  howpublished = {\url{https://control.ros.org/}},
  note   = {Accessed: 2025-08-03}
}

@book{gamma1995design,
  title={Design patterns: elements of reusable object-oriented software},
  author={Gamma, Erich and Helm, Richard and Johnson, Ralph and Vlissides, John},
  year={1995},
  publisher={Pearson Deutschland GmbH}
}

@book{mcconnell2004code,
  title={Code complete},
  author={McConnell, Steve},
  year={2004},
  publisher={Pearson Education}
}

@article{liu1973scheduling,
  title={Scheduling algorithms for multiprogramming in a hard-real-time environment},
  author={Liu, Chung Laung and Layland, James W},
  journal={Journal of the ACM},
  volume={20},
  number={1},
  pages={46--61},
  year={1973},
  publisher={ACM}
}

@article{korsah2013comprehensive,
  title={A comprehensive taxonomy for multi-robot task allocation},
  author={Korsah, G Ayorkor and Stentz, Anthony and Dias, M Bernardine},
  journal={The International Journal of Robotics Research},
  volume={32},
  number={12},
  pages={1495--1512},
  year={2013},
  publisher={SAGE Publications}
}

@inproceedings{kloda2018latency,
  title={Latency analysis for data chains of real-time periodic tasks},
  author={Kloda, Tomasz and Bertout, Antoine and Sorel, Yves},
  booktitle={International Conference on Emerging Technologies and Factory Automation (ETFA)},
  year={2018},
  organization={IEEE}
}

@standard{cpp17,
  title        = {Programming Languages — C++},
  institution  = {International Organization for Standardization},
  year         = {2017},
  type         = {ISO/IEC},
  number       = {14882:2017},
}

@misc{SIOF,
  author       = {{cppreference.com contributors}},
  title        = {{Static Initialization Order Fiasco}},
  howpublished = {\url{https://en.cppreference.com/cpp/language/siof}},
  note         = {Accessed: 2026-05-26}
}

@misc{URDF,
  author       = {{ROS Wiki}},
  title        = {{XML Robot Description Format (URDF)}},
  howpublished = {\url{https://wiki.ros.org/urdf}},
  note         = {Accessed: 2026-05-26}
}

@misc{xacro,
  author       = {{ROS Wiki}},
  title        = {{xacro}},
  howpublished = {\url{https://wiki.ros.org/xacro}},
  note         = {Accessed: 2026-05-26}
}

@misc{AUTOSAR,
  author       = {{AUTOSAR}},
  title        = {{AUTOSAR}},
  howpublished = {\url{https://www.autosar.org}},
  note         = {Accessed: 2026-05-26}
}

@misc{ROV,
  author = {{IWIN-FINS Lab}},
  title  = {FinsROV: an Underwater Camera-Based Multi-robot platform},
  year   = {2022},
  howpublished = {\url{https://github.com/FPJ-GAOGE/FinsROV-An-Underwater-Camera-Based-Multi-Robot-Platform}},
  note   = {Accessed: 2026-05-26}
}

@article{xu2025aucamp,
  title={Aucamp: An Underwater Camera-Based Multi-Robot Platform with Low-Cost, Distributed, and Robust Localization},
  author={Xu, Jisheng and Lin, Ding and Fong, Pangkit and Fang, Chongrong and Duan, Xiaoming and He, Jianping},
  journal={arXiv preprint arXiv:2506.09876},
  year={2025}
}

@misc{ROV_control,
  author = {{IWIN-FINS Lab}},
  title  = {FinsROV: an Underwater Camera-Based Multi-robot platform},
  year   = {2022},
  howpublished = {\url{https://github.com/FPJ-GAOGE/FinsROV-An-Underwater-Camera-Based-Multi-Robot-Platform}},
  note   = {Accessed: 2026-05-26}
}
